\newcommand{\R}{\mathbb{R}}
\newcommand{\N}{\mathbb{N}}
\newcommand{\GN}{\mathcal{N}} % Gaussian random variable 
\newcommand{\E}{\mathbb{E}}
\newcommand{\M}{\mathcal{M}}
\newcommand{\D}{\mathcal{D}}
\newcommand{\V}{\mathcal{V}}
\newcommand{\I}{\mathcal{I}}
\newcommand{\eqdef}{\overset{\mathrm{def}}{=\joinrel=}}
\theoremstyle{definition}
\newtheorem{definition}{Definition}[section]
\theoremstyle{assumption}
\newtheorem{assumption}{Assumption}[section]
\theoremstyle{proposition}
\theoremstyle{theorem}
\newtheorem{theorem}{Theorem}[section]
\theoremstyle{fact}
\theoremstyle{corollary}
\theoremstyle{lemma}
\newtheorem{lemma}{Lemma}[section]
\theoremstyle{remark}
\title{A Theoretical Perspective on Differentially Private Federated Multi-task Learning}
\author{
  Huiwen Wu \\
%   \thanks{Use footnote for providing further
%     information about author (webpage, alternative
%     address)---\emph{not} for acknowledging funding agencies.} \\
  Ant Group\\
  Hangzhou, Zhejiang, China \\
  \texttt{huiwen.whw@antfin.com} \\
  %% examples of more authors
   \And
Cen Chen \\
  Ant Group\\
 Hangzhou, Zhejiang, China \\
  \texttt{chencen.cc@antfin.com} \\
   \AND
  Li Wang\\
  Ant Group\\
  Hangzhou, Zhejiang, China \\
  \texttt{raymond.wangl@antfin.com}  \\
  %% \And
  %% Coauthor \\
  %% Affiliation \\
  %% Address \\
  %% \texttt{email} \\
  %% \And
  %% Coauthor \\
  %% Affiliation \\
  %% Address \\
  %% \texttt{email} \\
}
\begin{document}
\maketitle

\begin{abstract}
In the era of big data, the need to expand the amount of data through data sharing to improve model performance has become increasingly compelling. As a result, effective collaborative learning models need to be developed with respect to both privacy and utility concerns. In this work, we propose a new federated multi-task learning method for effective parameter transfer with differential privacy to protect gradients at the client level. Specifically, the lower layers of the networks are shared across all clients to capture transferable feature representation, while top layers of the network are task-specific for on-client personalization. Our proposed algorithm naturally resolves the statistical heterogeneity problem in federated networks. We are, to the best of knowledge, the first to provide both privacy and utility guarantees for such a proposed federated algorithm. The convergences are proved for the cases with Lipschitz smooth objective functions under the non-convex, convex, and strongly convex settings. Empirical experiment results on different datasets have been conducted to demonstrate the effectiveness of the proposed algorithm and verify the implications of the theoretical findings.
\end{abstract}

% keywords can be removed
\keywords{Differential Privacy \and Multi-task Learning \and Federated Learning}

\section{Introduction}
In the era of big data, data quality and quantity have become the most important factors that affect the effectiveness of the machine learning models trained.
The need to expand the amount of data through data sharing to improve model performance has become increasingly compelling. 
However, in reality, data are always isolated in different data federation such as organizations, companies, or edge devices. Data privacy is difficult to be effectively guaranteed across these data federation. 
Thus, there is an increasing interest in jointly training machine learning models without sharing data. 

To address such ``data isolation" problem, Federated Learning (FL) was proposed as a decentralized approach that enables collaboratively training while keeping the data on clients by only exchanging gradients/model parameters~\cite{li2020federated}. In particular,
Federated Averaging (\texttt{FedAvg} \cite{mcmahan2017communication}) was proposed, as the de facto method in the federated optimization, by averaging the gradients from the local clients. 
However, even sharing gradients may unintentionally lead to information leakage~\cite{shokri2017membership,hayes2019logan,melis2019exploiting,zhu2019deep}.
In order to protect the gradients of local clients, several approaches have been explored. Cryptographic approaches based on homomorphic encryption and secret sharing to ensure the privacy of local information can be found in \cite{bonawitz2016practical, gao2019privacy}. 
Those methods are computationally inefficient for non-linear operations, thus not practical for ML models at large scale or trained with frequent communications.
Several recent studies address the privacy issue in FL by combining differential privacy with existing federated algorithms to provide privacy guarantees~\cite{geyer2017differentially,heikkila2017differentially, dziugaite2018data, bassily2019private}.

Aside from the privacy concern, the inherent non-IID issue is also challenging, as data from different clients can be arbitrarily heterogeneous \cite{khaled2019first}.
Although \texttt{FedAvg} has shown to be empirically effective in heterogeneous settings, it cannot fully address the fundamental statistical heterogeneity \cite{mcmahan2017communication}, as it does not have the flexibility of allowing variable amount of updates for different clients.
To help address the statistical heterogeneity, different methods for personalization have been proposed to adapt global models for individual clients~\cite{smith2017federated, chen2020fedhealth, kulkarni2020survey}. 
Most of them employ a two-stage approach, i.e., collaboratively training the global model followed by client-level personalization through transfer learning techniques, such as 
parameter fine-tuning \cite{wang2019federated,mansour2020three}, 
model alignment \cite{chen2020fedhealth}, 
and knowledge distillation \cite{li2019fedmd}. 
However, the application of such two-stage methods is limited, as the performance of the locally adapted local model may be limited by the global model which is solely optimized for global accuracy. 
Few recent FL works jointly learn global and local models in a multi-task fashion by regularizing local model in objective function with task covariance or distance metrics between local and global parameters~\cite{smith2017federated,li2018federated}. 
However, those methods may increase the risk of overfitting, as significantly more model parameters are introduced in FL settings.

\begin{figure}[t!]
\centering
\includegraphics[width=0.7 \linewidth]{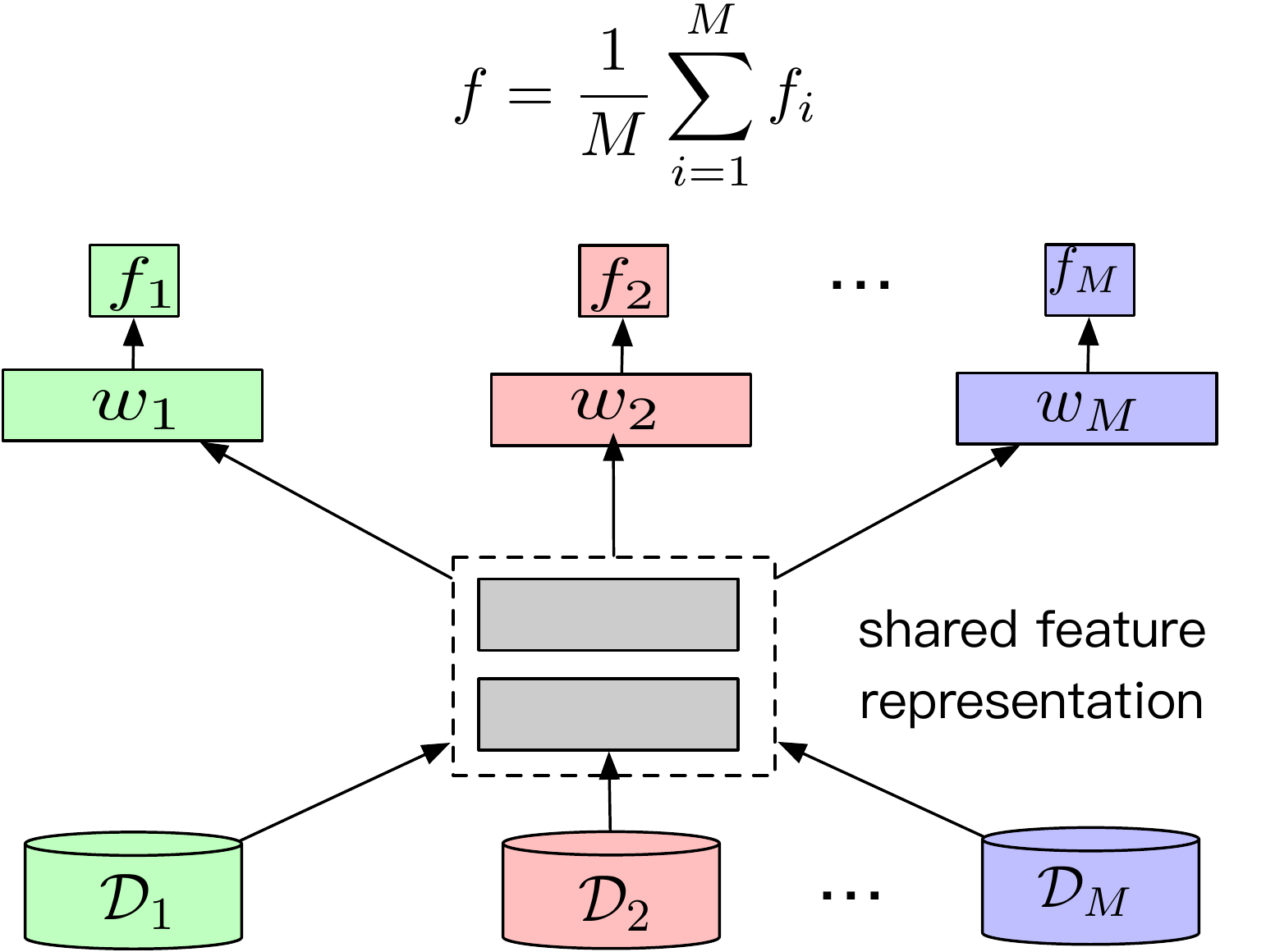}
\caption{Multi-Task Learning}
\label{fig:mtl}
\end{figure}

\begin{figure}[t!]
\centering
\includegraphics[width=0.7 \linewidth]{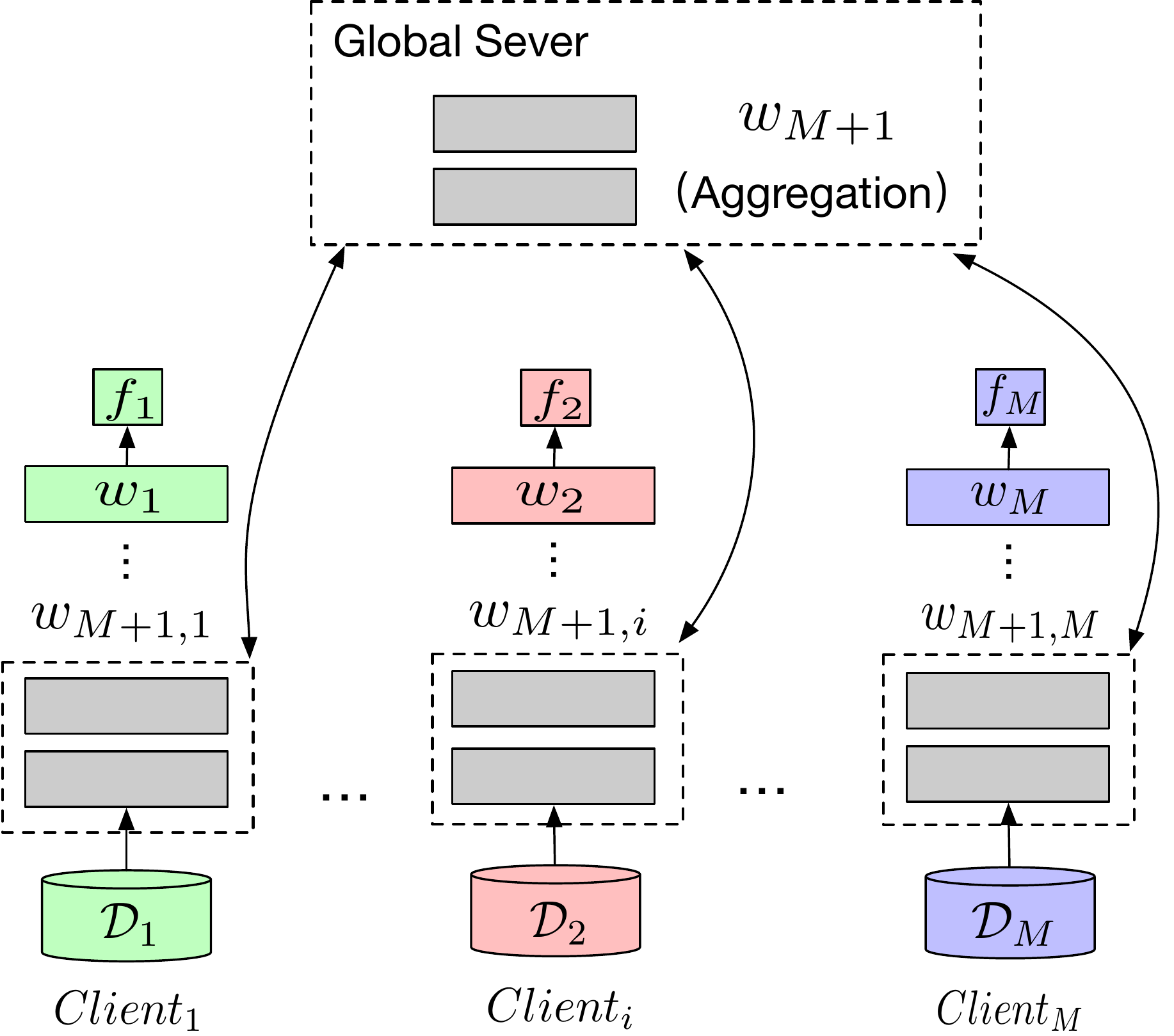}
\caption{Federated Multi-task Learning}
\label{fig:federated_mtl}
\end{figure}

In this paper, we present a Differentially Private Federated Multi-Task Learning method (\texttt{DPFedMTL}), that particularly enables federated optimization of heterogeneous client networks/tasks and protects the local model gradient information via differential privacy. 
More specifically, we first extend the widely used network structure in Multi-Task Learning (MTL) ~\cite{caruana1997multitask,zhang2017survey} (shown in Figure~\ref{fig:mtl}) that jointly learns a shared encoder to capture transferable feature representations across tasks by hard parameter sharing and utilizes task-specific upper layers to capture task heterogeneity, to the federated settings (shown in Figure~\ref{fig:federated_mtl}).
Our focus of this paper is more on investigating our proposed \texttt{DPFedMTL} algorithm \textit{from a theoretical perspective}. We provide a detailed analysis and proofs for both privacy guarantees and algorithm convergence. 
%Note, our method and analysis are general that not restricted to only sharing parameters at lower layers. 
Our contributions can be summarized as follows:
\begin{itemize}
    \item We extend the widely used MTL paradigm with hard parameter sharing to federated settings to better model client heterogeneity.  We further propose to incorporate differential privacy at the client-side to protect gradient information during the FL  communications.
    \item We are, to the best of our knowledge, the first to provide both privacy and convergence guarantees for the proposed federated algorithm. The convergences are proved for local Lipschitz smooth objective functions under global non-convexity, convexity, and strongly convexity. 
    \item Experiments on different non-IID datasets haven been conducted to demonstrate the effectiveness of the proposed algorithm and verify the theoretical implications.
\end{itemize}

\section{Background and Related Work}
% We first present the preliminaries and the related works.

\noindent \textbf{Differentially Private FL.}
One of the most important problems in FL to address is to protect gradients from leaking sensitive client-level information. 
% A common paradigm to for preventing deep learning models from leaking sensitive information during the training process is to inject noises in the gradient-level, such as DP-SGD~\cite{abadi2016deep}. 
% federated learning with dp 
Several recent works address the privacy issue in federated learning by combining DP with existing federated algorithms to provide privacy guarantees.
\cite{geyer2017differentially} clips the gradients by norms of clients' updates and shows a minor loss in model utility. 
Some explore bayesian DP in federated learning\cite{heikkila2017differentially,triastcyn2019federated}, while \cite{liang2020exploring} adopts Laplacian smoothing DP. 
Nevertheless, none of them study DP in the case of partial parameter sharing and aggregation.
In our work, we use Gaussian DP ~\cite{dong2019gaussian} for privacy analysis due to its efficiency in privacy accounting and privacy protection is at the client-level as describe in ~\cite{geyer2017differentially}. The goal is to hide the client's contribution during training.

\noindent \textbf{Personalization.}
Another problem raised in FL is the statistical heterogeneity, i.e., local data are non-IID. 
Different methods for personalization have been proposed to adapt global models for individual clients. Most of them employ the two-stage approach, i.e., FL training followed by client-specific personalization by leveraging transfer learning \cite{chen2020fedhealth, smith2017federated},
such as fine-tuning \cite{wang2019federated,mansour2020three}, model alignment \cite{chen2020fedhealth}, and knowledge distillation \cite{li2019fedmd}. 
However, the performance of the locally adapted local model may be bounded by the global model performance which is solely optimized for global accuracy. 
A few recent works jointly learn global and local models and regularize the local model in objective function by task covariance or distance metrics between the local and global parameters~\cite{smith2017federated,li2018federated}. Those methods may increase the risk of overfitting, as significantly more model parameters are introduced.
In our work, we extend the commonly used MTL paradigm with hard parameter sharing to the FL setting. Specifically lower layers of the network are shared to transfer knowledge across clients, while upper layers of the network are task-specific for personalization.

\noindent \textbf{Convergence Analysis of FL.}
 Theoretical works have focused on convergence analysis of federated learning with local gradient-type updates.
In \cite{haddadpour2019convergence}, authors give linear convergence for local gradient descent methods on federated learning of bounded gradient diversity, smoothness, and $\mu-$ Polyak-Lojasiewicz (PL) conditions of local objective functions. 
Convergence analysis of FedAvg with partial device participation on non-IID data is presented in \cite{li2019convergence}.
% with smoothness and strongly convexity assumptions are presented in \cite{li2019convergence} with the assumption of bounded variance and uniform bound of expectation
% of gradients. 
However, the existing convergence works are not applicable for FL methods with partial parameter sharing ~\cite{haddadpour2019convergence, khaled2019first, li2019convergence}. 
Inspired by the subspace decomposition method evolves in the numerical analysis field~\cite{xu1992iterative,chen2020randomized}, we interpolate parameters of local clients to a high dimensional space, formulate a global optimization problem in this high dimension space, and solve it by subspace decomposition technique.
% and achieve linear convergence for the strongly convex objective function and sublinear for convex objective function with a uniform bound of diversity and norm of shared layer partial gradients. 
As far as we know, we are the first to apply such a technique in analyzing the convergence of the proposed DPFedMTL algorithm 
% with considerations of partial participation and flexible synchronization 
for non-convex, convex and strongly-convex cases.

\section{Federated Multi-task Learning}
\label{sec:methods}
In this section, we first formalize the Multi-Task Learning (MTL) in the federated setting, followed by presenting the threat model. We highlight the necessity of incorporating differential privacy for federated MTL. Finally, we present our proposed \texttt{DPFedMTL} algorithm.

\subsection{Multi-Task Learning}
\label{sec:mtl}
Multi-task learning (MTL) allows deep neural networks to leverage useful knowledge from multiple related tasks to help improve the performance of all the tasks. It has demonstrated its effectiveness over various tasks in NLP and CV. Refer to the survey for more details~\cite{zhang2017survey}.

%%%%%%%%%%%%%%%
% In MTL, 
We assume there is a set of $M$ clients, each with a learning tasks $\tau_i$, where $i \in \{1,2,..., M\}$.
% $\{ \tau_1,\tau_2,..., \tau_M \}$.  
Each contains a training dataset $\D_{i} = \cup_{j=1}^{N_i} \{ (x_{i,j}, y_{i,j}) \}$, where $N_i$ is the amount of data for task $\tau_i$. 
% $\D = \cup_{i=1}^M \D_i$
Tasks can be either homogeneous or heterogeneous.
As tasks are often related in that they all share a common underlying representation, one of the most commonly used approaches to MTL in neural networks is through hard parameter sharing that dates back to \cite{caruana1997multitask}. 
The basic idea is to learn \textit{shared feature representation} by jointly optimizing different tasks, which is often achieved by sharing the parameters of hidden layers~\cite{caruana1997multitask,zhang2017survey}. 
In this work, we establish our proposed algorithm based on this type of MTL approach.

As sketched in Figure~\ref{fig:mtl}, typically the top layers of the network are kept as task-specific, while the lower layers are shared across all tasks. 
We denote the parameters of task-specific networks as $\{ w_1, w_2,..., w_M \}$ and shared parameters as $w_{M+1}$.
The goal for each task is to lean a function $f_i$, while the global objective is to optimize $f= \frac{1}{M}\sum_{i=1}^{M} f_i$. 

\subsection{Federated MTL Setup}
\label{sec:federated_mtl}

We consider a standard federated setting that involves collaborative training across $M$ clients (e.g., users, organizations, or devices).
Unlike centralized optimization, the goal for federated learning is to learn a global model in a decentralized fashion without exposing each client data $D_i$. 

As illustrated in Figure~\ref{fig:federated_mtl}, each client performs local optimization with respect to its local objective function $f_i$ as:
\begin{eqnarray}\label{def:local_obj}
f_i(w_i, w_{M+1}) 
%&\eqdef& \frac{1}{N_i} \sum_{j=1}^{N_i} \ell(y_{i,j}, \phi_i(x_{i,j}))  \notag \\
&=&  \frac{1}{N_i} \sum_{j=1}^{N_i} \ell( x_{i,j}, y _{i,j}, w_i, w_{M+1}),
\end{eqnarray}
where $w_i$ is task-specific parameters of client $i$, $w_{M+1}$ is parameters of shared common layers, $\ell$ is loss function.  %defined by task.
%, and 
%$\phi_i$ is neural network of a client $i$ parameterized by parameter $\{ w_i,  w_{M+1} \}$.

In federated setup, the global objective for the central model is composed by the average of local objectives:
\begin{equation}\label{def:global_obj}
f \eqdef \frac{1}{M} \sum_{i=1}^M f_i
\end{equation}
where $f_i$ is the local objective function defined in (\ref{def:local_obj}). %for each client $i$. 

During the federated optimization process, parameters $w_{M+1}$ are shared across all clients and synchronized every one or a few steps. At each communication round, the central server performs aggregation of the model updates from the local clients, updates the global model, and then distributes it back to all the clients.

\subsection{Privacy in Federated MTL}
% Before diving into the details of Federated MTL, we discuss the threat model we consider in this paper.
In FedMTL, the potential information leakage only arises when the clients synchronize their learned shared network parameters with the global server. Thus the potential leakage profile contains all the intermediate $w_{M+1, i}$ that each client $i$ reveals to the global sever during the local training process and the aggregated $w_{M+1}$ that the server distributes back.
We assume a \textit{semi-honest threat model} that honest-but-curious participants follow exactly the computation protocols but may try to infer as much information as possible when interacting with the others. The potential adversary can be both the global server and the local clients. 

% \subsubsection{Differential Privacy for Federated MTL.}
A common paradigm %to achieve DP for 
to prevent deep learning models from leaking sensitive information during the training process is to inject randomized noises, such as Gaussian noises, at the gradient level~\cite{abadi2016deep} using the idea of Differential Privacy(DP).
To protect privacy, in this paper, we explore such randomized technique in the context of federated MTL to protect clients' gradients contribution during model communications. 
% Specifically, in this work, we achieve\textit{ $C_p \left( G_{{1}/{\sigma}} \right)^{\otimes T}$-DP for each client using Gaussian mechanism}, which will be elaborated in Section~\ref{sec:privacy}.
% , where the noises added is calibrated by gradient sensitivity, elaborated in Section~\ref{sec:privacy}.

%
% global parameter and local parameter 
\begin{algorithm}[t!]
\small
\SetAlgoLined
\textbf{Input:} 
$M$ clients, 
$K$ number of clients participated in local training, 
%clipping norm $C$,
variance of Gaussian noise $\sigma$, 
%sampling distribution $\{\alpha_i\}_{i=1}^M$, 
%clients weights $\{\alpha_i\}_{i=1}^M$,
number of rounds $T$, 
sensitivity of local gradients $S$, 
aggregation interval $H$,
step size $\eta$. \\
\textbf{Intialize:} $w_i$ for all $i \in [1,2,\cdots,M+1]$ \\
\For{$t := 0$ to $T-1$}{
	Server randomly sample a subset $\I_t$ of $K$ clients with distribution $\alpha_i = \frac{K}{M}$ with Poisson sampling.\\
	Server sends $w^{t}_{M+1}$ to all chosen clients. \\
	\textbf{Local Update:} \\
	\For{each chosen client $i \in I_t$}{
	    Synchronize parameters: $w^{t}_{M+1,i} \leftarrow  w^{t}_{M+1}$ \\
	    \textbf{For shared layers:} { \\
	    $\;\;\;$Compute gradients: $g_{M+1,i}^t = \partial_{w_{M+1}} f_i(\mathbf{w}^t)$  \\
%		$\;\;\;$Clip gradients: 
%		\begin{equation} \label{update:clip}
%		 g_{M+1, i}^t = g_{M+1,i}^t / \max \left(1, \|g^t_{M+1,i} \|_2 / C \right)
%		\end{equation} \\
		$\;\;\;$Perturb gradients: 
		\begin{equation} \label{update:perturb}
		\tilde{g}_{M+1,i}^t = {g}_{M+1, i}^t +n_{M+1,i}, ~ n_{M+1,i} \sim \GN(0,\sigma^2 S^2)
		\end{equation}  \\		 
		$\;\;\;$Update parameters: 
				\begin{equation}  \label{update:DPFTL_c}
        					w^{t+1}_{M+1,i} = w^{t}_{M+1,i} - \eta \left( \tilde{g}_{M+1,i}^t  \right)
				\end{equation}
	    } \\
	    \textbf{For task-specific layers:}{ \\
	    $\;\;\;$Compute gradients: $g_{i}^t = \partial_{w_i} f_i(\mathbf{w}^t)$ \\
		$\;\;\;$Update parameters: 	    
			\begin{equation} \label{update:DPFTL_l}
    				w^{t+1}_{i} = w^t_{i} - \eta g_i^t 
    			\end{equation}
	    }
%		Compute and clip the gradients of shared layers:
%		\begin{align*}
%        g_{M+1,i}^t &= \partial_{w_{M+1}} F_i(\mathbf{w}^t)  \notag\\
%        g_{M+1, i}^t &= g_{M+1,i}^t / \max \left(1, \|g^t_{M+1,i} \|_2 / C \right) \notag
%		\end{align*}
%		\\
%         Add noises and update shared parameters:
%		\begin{equation}  \label{update:DPFTL_c}
%        w^{t+1}_{M+1,i} = w^{t}_{M+1,i} - \eta \left( {g}_{M+1, i}^t + \GN(0,\sigma^2 S^2)  \right)
%		\end{equation}\\
%    	
%		Update the parameters of top task-specific layers:
%    	$$
%    	g_{i}^t = \partial_{w_i} F_i(\mathbf{w}^t)
%	$$
%	\begin{equation} \label{update:DPFTL_l}
%    	w^{t+1}_{i} = w^t_{i} - \eta g_i^t 
%    	\end{equation}
	}	
	\textbf{Server Aggregation:}\\
	\If{$t~\rm{mod}~H == 0$}{
	Collect model weights $w^{t+1}_{M+1,i}$ sent by all clients. \\
	Aggregates the weights:
	\begin{equation}  \label{update:aggre}
	 w^{t+1}_{M+1} = \frac{1}{K} \sum_{i \in \I_t}  w^{t+1}_{M+1, i}
	 \end{equation}
	Send $w^{t+1}_{M+1}$ to each client.
	}
}
\caption{Differentially Private Federated MTL}
\label{alg:DPFedMTL}
\end{algorithm}

\subsection{Differentially Private Parameter Transfer}
Having the notations and terminology described above, we now present the algorithm of Differentially Private Federated Multi-Task Learning (\texttt{DPFedMTL}) in Algorithm \ref{alg:DPFedMTL}. In our federated MTL setting, common knowledge is shared across the clients through hard sharing of the lower layers, while upper layers remain to be task-specific to capture task-level information.
In this way, the statistical heterogeneity can be fundamentally better modeled.
To be specific, at each training step, clients are selected with Poisson uniform sampling $\alpha_i = K/M$, where $K$ is the number of selected clients and $M$ is the number of all clients, optimizes its local objective function, and contributes differentially privately to the shared parameters $w_{M+1}$. 

Specifically, in this work, we achieve\textit{ $C_p \left( G_{{1}/{\sigma}} \right)^{\otimes T}$-DP for each client using Gaussian mechanism}.
The magnitude of Gaussian noises added are calibrated with gradient sensitivity $S$, which will be elaborated in details in Section~\ref{sec:privacy}.

% , i.e.,
% $
% f_i(w_i, w_{M+1}) 
% = \frac{1}{N_i} \sum_{j=1}^{N_i} \ell( x_{i,j}, y _{i,j}, w_i, w_{M+1}),
% $
% Let $\D_{i} = \cup_{j=1}^{N_i} \{ (x_{i,j}, y_{i,j}) \}$ be dataset in local client $i$ and $\D = \cup_{i=1}^M \D_i$ denotes all the data. 
% Let $p_i = \frac{N_i}{\sum_{i=1}^M N_i}$ be the percentage of local data in $D_i$ over all data. 
% Thus local objective function $f_i(w_i, w_{M+1})$ defined in Eqn.~(\ref{def:local_obj}) is related to the parameters of both local task-specific layers and global shared layers,
% while the global objective function in Eqn.~(\ref{def:global_obj}) is composed by the weighted sum of local objective functions, where $p_i$ is the weight~\footnote{Specifically, one can choose $p_i = \frac{|D_i|}{\sum |D_i|},$ i.e., the proportion of local data $D_i$ w.r.t. all the data  \cite{mcmahan2017communication}.
% Or one can set $p_i$ as the probability of local client $i$ to be chosen, i.e., the sampling distribution.}.
% With distribution $\{p_i\}_{i=1}^M$, we random sampling client $i$ and update parameters $w_i$ and $w_{M+1}$. 

%
\section{Privacy Analysis}
\label{sec:privacy}
In the proposed \texttt{DPFedMTL} algorithm, we employ a Gaussian mechanism $\M$ that injects Gaussian noises calibrated by function sensitivity~\cite{dwork2014algorithmic}. 
We denote $\M : \R^N \times \V_{M+1} \rightarrow \V_{M+1}$ as follows:
% We denote $\M : \R^N \times \V_{M+1} \rightarrow \V_{M+1}$ as our randomized Gaussian mechanism, where 
\begin{equation} \label{def:mec}
\M \left(\D_i, w_{M+1}^t \right) = w_{M+1}^{t+1}. 
\end{equation}

To obtain the privacy guarantee of \texttt{DPFedMTL}, we first quantify the \textit{sensitivity} of local gradients for shared layers. Suppose gradients are clipped, the sensitivity is, therefore, upper bounded. Let $\| \cdot \|$ denote $\ell_2$ norm. We have the following analysis. 
\begin{lemma} [Sensitivity of local gradients with clipping] Suppose local gradients of the shared layers are clipped with constant $C$. The sensitivity $S$ of averaging in local gradients is $S = 2 C K/M $. 
\label{thm:sens_2}
\end{lemma}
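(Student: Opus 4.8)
The plan is to bound the $\ell_2$ sensitivity of the server's aggregation step, viewed as a function of a single client's dataset, under the client-level neighboring relation in which two datasets $\D$ and $\D'$ differ in exactly one client's data. Since the mechanism $\M$ of \eqref{def:mec} produces $w_{M+1}^{t+1}$ and the only data-dependent quantity entering the aggregation \eqref{update:aggre} is the collection of shared-layer gradients $\{g_{M+1,i}^t\}$, I would fix the current iterate $\mathbf{w}^t$ and the sampled index set $\I_t$, and treat each $g_{M+1,i}^t = \partial_{w_{M+1}} f_i(\mathbf{w}^t)$ as a function of client $i$'s data alone. The object whose sensitivity calibrates the Gaussian mechanism is then the aggregated shared-layer gradient $\tfrac{1}{K}\sum_{i\in\I_t} g_{M+1,i}^t$ (equivalently, the summed gradient before normalization).

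First I would exploit the clipping hypothesis: every participating gradient satisfies $\|g_{M+1,i}^t\|\le C$. For the two neighboring datasets only the term of the differing client $i_0$ changes, so the aggregate differs only through $g_{M+1,i_0}^{t}(\D)$ versus $g_{M+1,i_0}^{t}(\D')$. By the triangle inequality, $\|g_{M+1,i_0}^{t}(\D) - g_{M+1,i_0}^{t}(\D')\| \le \|g_{M+1,i_0}^{t}(\D)\| + \|g_{M+1,i_0}^{t}(\D')\| \le 2C$, which furnishes the factor $2C$. This uses the \emph{replace one client} convention; the alternative add/remove convention would drop the $2$, so fixing the neighboring relation is essential. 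Next I would account for the Poisson sampling of Algorithm~\ref{alg:DPFedMTL}: client $i_0$ is included in $\I_t$ with probability $\alpha_{i_0}=K/M$, so the differing client affects the aggregate only on this event. Folding this inclusion probability into the per-client contribution scales the change by $K/M$, and combining with the clipping bound yields $S \le 2C\cdot K/M$, consistent with the per-client noise calibration $n_{M+1,i}\sim\GN(0,\sigma^2 S^2)$ in \eqref{update:perturb} that is meant to deliver $G_{1/\sigma}$-DP per round.

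I expect the main obstacle to be reconciling the randomness of Poisson sampling with the deterministic, worst-case clipping bound: one must argue carefully how the inclusion probability $K/M$ enters the sensitivity — whether as an effective (expected) sensitivity of the summed gradient, or through the $1/K$ normalization of the average — so that the constant comes out as exactly $2CK/M$ rather than $2C/K$ or $2C/M$. Getting this bookkeeping right, together with pinning down the neighboring convention that produces the factor $2$, is the crux; the remaining steps are routine applications of the clipping bound and the triangle inequality.
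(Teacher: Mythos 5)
Your proposal follows essentially the same route as the paper's own proof: the factor $2C$ comes from the triangle inequality applied to the two clipped gradients of the single differing client, and the factor $K/M$ is then folded in as the scaling attributed to the Poisson sampling, yielding $S = 2CK/M$. The bookkeeping ambiguity you flag as the crux (whether $K/M$ should enter via the sampling probability or whether the $1/K$ averaging normalization of Eq.~(\ref{update:aggre}) should appear instead) is glossed over in the paper's proof as well, which simply places the factor $K/M$ in front of the aggregated gradient difference without further justification.
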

\begin{proof}
$$
\|g^t_{D_i} - g^t_{D_i'} \|  \leq 2 C
$$
As one sample changes in whole datasets $\D$, gradient $g_i^t$ changes in aggregation step (\ref{update:aggre}). 
Thus, we have:
\begin{align*}
   \| K/M (\sum_{i\in \I_t} g_{i,D_i}^t -  g_{i,D_i'}^t) \| &= \| K/M ( g_{i,D_i}^t -  g_{i,D_i'}^t )\|    \\
   &\leq 2 C K/M
\end{align*}

\end{proof}
%For specific, if we use Poisson sampling, $\alpha_i = K/M$ and $S = 2CK/M$. 
%-----------------------------------------------------------------------------------------------------------------------------------------------------
%						Privacy per iteration
%------------------------------------------------------------------------------------------------------------------------------------------------------
Based on the sensitivity results in Lemma~\ref{thm:sens_2}, we have privacy loss per iteration via Gaussian Differential Privacy (GDP)~\cite{dong2019gaussian}.  
\begin{lemma} [Privacy per iteration] Suppose local gradients of shared layers are clipped with constant $C$.  Let $n_{M+1}$ be noise sampled from Gaussian mechanism $\M $ with variance $\sigma^2 \cdot S^2$, where $S = 2 C \frac{K}{M}$ defined in Lemma \ref{thm:sens_2}. Then $\M$ satisfies %$\frac{1}{\sigma}$-GDP. 
 $G_{1/\sigma}$-DP,
where 
$
G_{1/\sigma}\left(\cdot \right) = \Phi(\Phi^{-1}(1 - \cdot) - 1/\sigma)
$
and $\Phi$ denotes the standard normal Cumulative Distribution Function. 
\end{lemma}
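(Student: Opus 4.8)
The plan is to recognize this as a direct instantiation of the canonical Gaussian mechanism result within the Gaussian Differential Privacy (GDP) framework of~\cite{dong2019gaussian}. Recall that a mechanism $\M$ is $\mu$-GDP (written $G_\mu$-DP) if, for every pair of neighboring datasets $\D_i$ and $\D_i'$, the trade-off function between the output distributions $\M(\D_i, \cdot)$ and $\M(\D_i', \cdot)$ dominates $G_\mu(\alpha) = \Phi(\Phi^{-1}(1-\alpha) - \mu)$, which is precisely the trade-off function for distinguishing $\GN(0,1)$ from $\GN(\mu,1)$. First I would isolate the randomized quantity: by~\eqref{update:perturb} and~\eqref{update:DPFTL_c}, the only data-dependent output exposed is the perturbed aggregate, whose randomness is an additive Gaussian $\GN(0,\sigma^2 S^2)$ applied to a statistic (the clipped, averaged gradient) of $\ell_2$-sensitivity $S = 2CK/M$ established in Lemma~\ref{thm:sens_2}.

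The key computation is the trade-off function between two isotropic Gaussians with common covariance $\sigma^2 S^2 I$ whose means differ by a vector of norm at most $S$. By the rotation invariance of the isotropic Gaussian, this reduces to the one-dimensional case: the optimal likelihood-ratio test for $\GN(a,\tau^2)$ versus $\GN(b,\tau^2)$ has trade-off function $\Phi(\Phi^{-1}(1-\alpha) - |a-b|/\tau)$, i.e. $G_{\|a-b\|/\tau}$ with $\tau = \sigma S$. Substituting the maximal mean separation $\|a-b\| = S$ gives $G_{S/(\sigma S)} = G_{1/\sigma}$, matching the claimed parameter exactly.

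Finally I would close the worst-case argument using monotonicity: $G_\mu(\alpha)$ is nonincreasing in $\mu$ for each fixed $\alpha$, so any neighboring pair realizing a mean gap $d \le S$ yields a trade-off function $G_{d/(\sigma S)} \ge G_{1/\sigma}$ pointwise. Hence every neighboring pair is at least as hard to distinguish as $G_{1/\sigma}$, and $\M$ is $G_{1/\sigma}$-DP. The main obstacle I anticipate is not the scalar computation but justifying the reduction from the multivariate Gaussian to the scalar trade-off function and verifying that the supremum of distinguishability over neighbors is attained exactly at the sensitivity bound $S$; once the isotropy and the monotonicity of $G_\mu$ in $\mu$ are in hand, the remainder is substitution.
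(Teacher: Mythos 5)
Your proposal is correct and follows essentially the same route as the paper: the paper's proof is a one-liner citing Theorem 2.7 of Dong et al.\ (the GDP Gaussian mechanism theorem) plus the remark that the subsequent parameter update is deterministic post-processing, whereas you unpack that cited theorem yourself by reducing the multivariate Gaussian testing problem to one dimension and using monotonicity of $G_\mu$ in $\mu$. The mathematical content is identical --- noise calibrated to the sensitivity $S$ at scale $\sigma S$ yields $G_{1/\sigma}$-DP --- so yours is simply a self-contained version of the paper's citation-based argument.
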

\begin{proof}
By Theorem 2.7 in Gaussian Differential Privacy \cite{dong2019gaussian}, the average gradient updating step is  $G_{1/\sigma}$-DP and the following gradient averaging step is deterministic. Thus, we conclude $\M$ is  $G_{1/\sigma}$-DP.
\end{proof}

%-----------------------------------------------------------------------------------------------------------------------------------------------------
%						Subsampling Amplification 
%------------------------------------------------------------------------------------------------------------------------------------------------------
\subsection{Privacy Accounting}
%\cc{This section 3.1 Privacy Accounting, re-modified, pls double check.}
In this section, we analyze the accumulated privacy loss of Algorithm~\ref{alg:DPFedMTL}  by Central Limit Theorem (CLT)
with Gaussian Differential Privacy (GDP) \cite{dong2019gaussian}. 
GDP has demonstrated its superiority and efficiency in tractably analyzing subsampling and approximated composition of deferentially private algorithms compared to moment accountant \cite{abadi2016deep}. %and R{\'e}nyi differential privacy \cite{mironov2017renyi}. 
% superiority in handling subsampling and composition.
% serves as a computationally efficient tool for tractably analyzing the exact composition of deferentially private algorithms.

\subsubsection{Subsampling}
We first analyze the privacy amplification of subsampling. Specifically, in Algorithm~\ref{alg:DPFedMTL}, Poisson sampling is adopted, i.e., uniform sampling without replacement with probability $\alpha_i = p = K/M$.
%In Algorithm~\ref{alg:DPFedMTL}, suppose we apply Poisson sampling, i.e. uniform sampling without replacement with probability $\alpha_i = p = K/M$. We have following privacy analysis. \\
%at every step, we random sample $K$ out of total $M$ clients 
%with Poisson Sampling, i.e., uniform sampling with distribution $p_i = p = \frac{K}{M}$ without replacement.
%We denote the sub-sampling as $\texttt{Sample}_p$.
%% with uniform distribution, i.e., $p_i = p = \frac{K}{M}$. 
\begin{definition}
For $y \in (-\infty, \infty)$, $g^*(y) = \sup_{-\infty < x < \infty} yx - g(x)$ is the convex conjugate of function $g$. 
\end{definition}
\begin{definition}
Define inverse function of $f$ as $f^{-1}(y) = \inf \{ t \in [0,1]: f(t) \leq \alpha \}$ for $\alpha \in [0,1]. $
\end{definition}
\begin{definition} 
For any $p \in [0,1]$, define the operator $C_p$ acting on trade-off functions as:
$$
C_p(f) := \min \{f_p, f_p^{-1} \}^{**},
$$
where $C_p$ is called as the $p$-sampling operator.  
\end{definition}

Following the previous work~\cite{dong2019gaussian}, we have privacy analysis of the composition of subsampled mechanism and $\M$. 
\begin{lemma} [Subsampling] $\M$ defined in Eq.(\ref{def:mec}) satisfies %$\frac{1}{\sigma}$-GDP, i.e., 
$G_{1/\sigma}$-DP. The composition of $\M$ and Poisson subsampling with probability $p = K/M$ is $C_p \left( G_{1/\sigma} \right)$-DP. 
\end{lemma}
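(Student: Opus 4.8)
The first assertion is immediate: it is exactly the conclusion of the preceding ``Privacy per iteration'' lemma, so it remains only to establish the amplification under Poisson subsampling. The plan is to invoke the privacy amplification by subsampling theorem of Gaussian Differential Privacy~\cite{dong2019gaussian}, which states that whenever a base mechanism is $f$-DP, its composition with Poisson sampling at rate $p$ is $C_p(f)$-DP. Specializing to $f = G_{1/\sigma}$ then yields the claim. Concretely, I would first record that $\M$ is $G_{1/\sigma}$-DP and that the sampling step in Algorithm~\ref{alg:DPFedMTL} is Poisson sampling with inclusion probability $p = K/M$, so that the two hypotheses of the theorem are met.

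To explain why the operator $C_p$ is the right object, I would unwind the structure of the subsampled mechanism on a pair of neighboring datasets. Fix $\D$ and a neighbor $\D' = \D \cup \{x\}$ differing by the single record $x$. Under Poisson sampling, $x$ is included in the subsample with probability $p$ and omitted with probability $1-p$; hence the output distribution on $\D'$ is the mixture $(1-p)P + pQ$, where $P$ is the output distribution on $\D$ and $Q$ is the distribution conditioned on $x$ being drawn. Using the joint convexity of trade-off functions, the trade-off between $P$ and this mixture is lower bounded by $f_p := p\,G_{1/\sigma} + (1-p)\,\mathrm{Id}$, where $\mathrm{Id}$ is the trade-off function of two identical distributions. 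This handles the ``insertion'' direction.

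Because differential privacy must hold symmetrically in the two neighbors, I would next treat the ``deletion'' direction, in which the roles of $\D$ and $\D'$ are swapped; this contributes the bound $f_p^{-1}$ obtained by inverting the trade-off function. Combining the two directions gives the pointwise lower bound $\min\{f_p, f_p^{-1}\}$ on the true trade-off curve. A genuine trade-off function must be convex, so I would finally pass to the convex envelope via the biconjugate $\{\cdot\}^{**}$, arriving at $C_p(G_{1/\sigma}) = \min\{f_p, f_p^{-1}\}^{**}$, exactly the operator defined above.

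The main obstacle is the mixture estimate in the second step: showing that distinguishing $P$ from $(1-p)P + pQ$ is no easier than the curve $f_p$ predicts, which rests on the joint convexity and post-processing invariance of trade-off functions rather than on any elementary computation. Verifying that the biconjugate still lower bounds the achievable trade-off --- so that passing to the convex envelope does not overstate the privacy --- is the other point requiring care. Since both facts are precisely what the GDP subsampling theorem packages, the remaining work reduces to checking its hypotheses, which we have already verified.
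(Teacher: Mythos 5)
Your proposal takes essentially the same route as the paper: the paper proves this lemma purely by citing the subsampling-amplification theorem of Gaussian Differential Privacy \cite{dong2019gaussian}, applied to the per-iteration guarantee that $\M$ is $G_{1/\sigma}$-DP, exactly as you do in your first paragraph. Your further unwinding of the mixture bound, the insertion/deletion symmetrization via $\min\{f_p, f_p^{-1}\}$, and the biconjugate $\{\cdot\}^{**}$ is simply the internal proof of that cited theorem, so there is no substantive difference and no gap.
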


%---------------------------------------------------------
%						Accumulated Privacy Loss 
%---------------------------------------------------------
% By Composition Theorem 5.1 
\subsubsection{Composition}
By Composition Theorem~\cite{dong2019gaussian}, we have the following accumulated privacy loss of Algorithm~\ref{alg:DPFedMTL} after a number of training steps $T$.  \begin{theorem}
Given the sampling probability $alpha_i = K/M$ and the number of steps $T$,
Algorithm~\ref{alg:DPFedMTL} is $C_p \left( G_{{1}/{\sigma}} \right)^{\otimes T}$-DP.
%, with $p = \frac{K}{M}$ be the uniform sampling ratio. 
\end{theorem}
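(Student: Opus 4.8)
The plan is to recognize the full algorithm as a $T$-fold composition of identical per-iteration mechanisms and then invoke the composition theorem for trade-off functions from Gaussian Differential Privacy~\cite{dong2019gaussian}.

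First I would isolate a single round $t$ of Algorithm~\ref{alg:DPFedMTL}. Within that round, the only operation that touches the data through a released quantity is the synchronization of the shared-layer parameters: the server subsamples a set $\I_t$ of clients via Poisson sampling with probability $p = K/M$, each selected client perturbs its shared-layer gradient with Gaussian noise $\GN(0, \sigma^2 S^2)$, and the aggregated $w_{M+1}^{t+1}$ is published. The task-specific updates in (\ref{update:DPFTL_l}) never leave the client and therefore contribute nothing to the leakage profile described in the threat model, so they can be ignored for the privacy accounting. By the Subsampling Lemma, this single round---subsampling composed with $\M$---is $C_p(G_{1/\sigma})$-DP.

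Next I would treat the entire run as the adaptive composition of $T$ such rounds. The subtlety is that round $t+1$ operates on the state $w_{M+1}^{t+1}$ produced by round $t$, so the mechanisms are not independent but adaptively chosen. The $f$-DP framework handles exactly this situation: by the Composition Theorem in~\cite{dong2019gaussian}, if the $t$-th mechanism is $f_t$-DP for every $t$ (with $f_t$ allowed to depend on previous outputs), then the composed mechanism is $f_1 \otimes \cdots \otimes f_T$-DP, where $\otimes$ denotes the tensor product of trade-off functions. Since every round yields the same trade-off function $f_t = C_p(G_{1/\sigma})$, the tensor product collapses to the $T$-fold power, giving $C_p(G_{1/\sigma})^{\otimes T}$-DP and completing the argument.

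The main obstacle I anticipate is justifying the adaptive-composition step cleanly: one must confirm that the per-round guarantee $C_p(G_{1/\sigma})$ holds \emph{uniformly} over all possible prior states $w_{M+1}^{t}$, so that the hypotheses of the composition theorem are met regardless of the history. This in turn relies on the sensitivity bound $S = 2CK/M$ of Lemma~\ref{thm:sens_2} being history-independent---which it is, since it comes purely from gradient clipping---and on the privacy being defined at the client level, so that neighboring datasets differ in one client's entire contribution. Verifying that the sampling probability and noise calibration are identical across rounds is then routine, and the tensor-power simplification follows immediately.
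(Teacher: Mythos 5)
Your proposal is correct and follows essentially the same route as the paper: the per-round guarantee $C_p\left(G_{1/\sigma}\right)$ comes from combining the Gaussian mechanism (calibrated by the clipping-based sensitivity of Lemma~\ref{thm:sens_2}) with the Poisson subsampling lemma, and the $T$-fold tensor product then follows from the composition theorem of~\cite{dong2019gaussian}. Your added care about adaptivity---checking that the per-round trade-off function holds uniformly over prior states and that task-specific updates never leave the client---is a useful explicit justification of steps the paper leaves implicit, but it does not change the structure of the argument.
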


%----------------------------------------------------------
%						Central Limit Theorem 
%----------------------------------------------------------
% \subsubsection{Central Limit Theorem}
According to Central Limit Theorem in ~\cite{bu2019deep}, we have approximated bound $G_{\mu}$ of privacy loss.
%of Algorithm~\ref{alg:DPFedMTL}.
\begin{theorem}\label{thm:privacy_CLT}
Suppose Algorithm~\ref{alg:DPFedMTL} run with number of steps $T$ and uniform sampling without replacement with distribution $\alpha_i = K/M$, which satisfy $p \sqrt{T} \rightarrow \nu$. Then 
$C_p \left( G_{1/\sigma} \right)^{\otimes T} \rightarrow G_{\mu}$ uniformly as $T \rightarrow \infty$
where $\mu = \nu \cdot \sqrt{T (e^{1/\sigma^2} - 1)}. $
\end{theorem}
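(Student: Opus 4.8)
The plan is to recognize this statement as a direct instance of the Gaussian Differential Privacy central limit theorem for compositions of subsampled mechanisms, and to reduce its proof to verifying the moment hypotheses of that CLT for our particular single-step mechanism. By the preceding Subsampling lemma, one round of Algorithm~\ref{alg:DPFedMTL} is governed by the trade-off function $C_p\left(G_{1/\sigma}\right)$, and by the Composition Theorem the $T$ rounds compose to $C_p\left(G_{1/\sigma}\right)^{\otimes T}$. The task is therefore purely to identify the limit of this $T$-fold tensor product as $T \to \infty$ along the regime $p\sqrt{T} \to \nu$, and no new privacy machinery is needed beyond what is already established.

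First I would represent $C_p\left(G_{1/\sigma}\right)$ as the trade-off function $T(P_0,P_1)$ of an explicit pair of distributions, namely a standard Gaussian tested against a $p$-mixture of that Gaussian with its unit-shifted copy, so that the privacy loss random variable $L = \log\left(dP_1/dP_0\right)$ can be written in closed form. Then I would compute the three functionals that drive the CLT as functions of $p$ and $\sigma$: the mean $\kappa_1 = \E_{P_0}[L]$ (the KL-type term), the variance $\kappa_2$, and the absolute third moment $\kappa_3$. The key algebraic fact to extract is the small-$p$ expansion: because a single Gaussian of parameter $1/\sigma$ has moment-generating structure controlled by $e^{1/\sigma^2}$, the leading term of $\kappa_1(p)$ is of order $p^2\left(e^{1/\sigma^2}-1\right)$, while $\kappa_3(p)$ is of strictly higher order in $p$.

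With these expansions in hand, I would invoke the Berry--Esseen-type trade-off-function CLT of \cite{dong2019gaussian,bu2019deep}. Summing the per-step functionals over the $T$ identical factors and substituting the scaling $p\sqrt{T}\to\nu$, the cumulative mean converges to a finite limit proportional to $\sqrt{e^{1/\sigma^2}-1}$ while the Lyapunov ratio $\big(\sum \kappa_3\big)\big/\big(\sum \kappa_2\big)^{3/2}$ is of order $1/\sqrt{T}\to 0$; this supplies the uniform convergence of $C_p\left(G_{1/\sigma}\right)^{\otimes T}$ to the Gaussian trade-off function $G_\mu$. The hypothesis $p\sqrt{T}\to\nu$ is precisely what keeps the limit nondegenerate: it balances the per-step privacy loss, which vanishes like $p^2$, against the growing number $T$ of compositions, fixing the constant $\nu$ that determines $\mu$.

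I expect the main obstacle to be the moment bookkeeping rather than any conceptual difficulty. The delicate points are justifying rigorously that the third-moment contribution is genuinely negligible under the stated scaling, so that the Lindeberg/Lyapunov condition of the cited CLT is met uniformly in the trade-off-function sense, and pinning down the exact asymptotic constant so that the limiting parameter carries the factor $\sqrt{e^{1/\sigma^2}-1}$ and matches the displayed $\mu$ after the substitution $p=K/M$ and $p\sqrt{T}\to\nu$. Everything else is a faithful transcription of the GDP composition CLT to the present subsampling rate.
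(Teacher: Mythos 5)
Your proposal is correct and takes essentially the same route as the paper: the paper supplies no internal proof at all, merely chaining the subsampling and composition lemmas and then invoking the CLT of \cite{bu2019deep,dong2019gaussian}, which is precisely the reduction you perform (your moment/Lyapunov computations are the content of that cited theorem's proof, including the small-$p$ expansion giving the $e^{1/\sigma^2}-1$ factor).

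One caveat worth recording: your derivation yields the limiting parameter $\mu = \nu\sqrt{e^{1/\sigma^2}-1}$, equivalently the practical approximation $p\sqrt{T\left(e^{1/\sigma^2}-1\right)}$ once $\nu$ is replaced by $p\sqrt{T}$, and this is the correct form; the paper's displayed $\mu = \nu\sqrt{T\left(e^{1/\sigma^2}-1\right)}$ carries both $\nu$ and an extra $\sqrt{T}$, double-counting the scaling and making the ``limit'' depend on $T$, so your argument would not reproduce the statement as written but would instead expose it as a typo inherited from conflating the two equivalent formulations.
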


\section{Convergence Analysis}
In this section, we analyze the convergence of our proposed \texttt{DPFedMTL} (Algorithm \ref{alg:DPFedMTL}). 
% differential private federated multi-task learning algorithm 
To the best of knowledge,  we are the first to give a convergence analysis of \texttt{DPFedMTL}. 
% We use the overlapping \textit{subspace decomposition} to describe parameter space in this setting due to its natural hierarchical structure of the shared and task-specific parameters.
Due to the inherent network structure with the shared and task-specific parameters in this federated setting, we describe the parameter space as a stable decomposition of local client parameters via \textit{subspace decomposition}. 
Such decomposition technique originally comes from multi-grid methods~\cite{xu1992iterative, stuben2001review} and has been applied to large-scale optimization problem recently~\cite{chen2020convergence,chen2020randomized}. Our proposed methods is analyzed with \textit{randomized subspace methods} \cite{chen2020randomized} from the optimization perspective.  
%Thus, our proposed algorithm can be naturally  %to be viewed as a 
%analyzed with \textit{randomized subspace methods} \cite{chen2020randomized} from the optimization perspective.  
%% Subspace decomposition is widely studied in numerical algebra and optimization problems, while overlapping subspace decomposition comes specifically from algebraic multi-grid methods \cite{xu1992iterative, stuben2001review}.

To be specific, we prove convergence of Algorithm \ref{alg:DPFedMTL} under the federated setting %of running with optimal learning 
with a fixed learning rate and Poisson sampling one client participate in each update, i.e. $K=1, \alpha_i =p =  \frac{1}{M}$ in Algorithm \ref{alg:DPFedMTL}. 
Three types of convergence are analyzed for \textit{Lipschitz continuous} local objective functions with global \textit{non-convex}, \textit{convex}, and \textit{strongly convex} assumptions. 
Details of necessary assumptions for all proofs are presented in convergence results Section \ref{sec:convergence_res}.

%\cc{$1/M$ pls define somehwere.}
\subsection{Subspace Decomposition}
In our federated algorithm, the global parameter space is aggregated by local objective parameter spaces. 
Denote $[M] = \{ 1, \cdots, M\}$ be an integer set. 
Let $w$ be the ensemble of local parameters $w_m, ~ m \in [M+1]$, where $w_{M+1} \in \mathcal V_{M+1}$ denotes the parameters shared by all clients and $w_i \in \mathcal V_i, ~ i \in [M]$ denote the each parameters held privately by local client $i$. Then we have the \textit{global parameter space} $\mathcal V$ as:
\begin{equation}
\mathbf{w} = \begin{bmatrix} w_1 & \cdots & w_{M} & w_{M+1} \end{bmatrix}^{\intercal}, 
\end{equation}
where $w_i \in \mathcal V_i$
and 
\begin{equation}
\mathcal V = \oplus_{i=1}^{M+1} \mathcal V_i.
\end{equation}
% Here, $\mathcal V$ is direct sum of local parameter space. 

%global objective function and local objective function 
% \subsection{Local and Global Objective Functions}
% For each client, local objective function is defined by (\ref{def:local_obj}),
% \begin{eqnarray*}
% f_i(w_i, w_{M+1}) &\eqdef& \frac{1}{N_i} \sum_{j=1}^{N_i} \ell(y_{i,j}, \phi_i(x_{i,j}))  \\
% &=&  \frac{1}{N_i} \sum_{j=1}^{N_i} \ell( x_{i,j}, y _{i,j}, w_i, w_{M+1})
% \end{eqnarray*}
% where $\phi_i$ is neural network of new device parameterized by parameter $ \left( w_i , w_{M+1} \right)$, $m_N$ is the amount of data held by new device and $\D_i = \{x_{i,j}, y_{i,j} \}_{j=1}^{N_i}$ is the local dataset hold by device $i$.
% As local model on each client $i$ is merely influenced by local layers parameters $w_m$ and shared layers parameters $w_{M+1}$, we have local objective function $f_i$ is a function with variables $\left( w_i , w_{M+1} \right). $ 

% Consider the global objective function over the ensemble space $\mathcal V = \oplus_{i=1}^{M+1} \mathcal V_i$. 

% \huiwen{inconsistent with previous definition}
% {\color{red}
% \begin{equation}\label{eqn:obj_global}
% f(w) = \sum_{m=1}^{M}  f_m\left( w_m , w_{M+1} \right).
% \end{equation}}

% For each client, local objective function is defined by Eq.(\ref{def:local_obj}).
% Each local model $\phi_i$ is merely influenced by local task-specific layers and shared layers, i.e., $\phi_i = \left( w_i , w_{M+1} \right)$. 

For each client, local model $\phi_i$ is parametrized by local task-specific layers and shared layers, i.e., $\phi_i = \left( w_i , w_{M+1} \right)$. 
Thus, we have the \textit{local parameter space} $\widetilde{\mathcal V}_i$ for local client model $\phi_i $ as:
% summation of local parameter space and shared parameter space, i.e.,
% can decompose original parameter space into sum of local parameter space combined with common parameter, i.e.,
$$
\widetilde{\mathcal V}_i = \mathcal V_i + \mathcal V_{M+1}, ~ \widetilde{\mathcal V}_i \subset \mathcal V. 
$$
Therefore, $\mathcal  V$ can be decomposed as a sum of subspaces $\widetilde{\mathcal V}_i, ~ i \in [M]$. 
\begin{equation} \label{def:decomp_new}
\V= \sum_{i=1}^M \widetilde{\V}_i. 
\end{equation}
Note here Eq.(\ref{def:decomp_new}) is not necessarily a direct sum nor orthogonal. The redundancy comes from shared layers $w_{M+1}.$

% \subsubsection{Restriction and Interpolation Operators.}
In order to connect global parameter space and local parameter space, we introduce the following \textit{restriction and interpolation operators}. 

Denote unit vector with $1$ on $i$-th position and $0$ others as $e_i = [0, \cdots, 1, \cdots, 0]^{\intercal}$, we have:
\begin{definition}\label{def:res_pro_operators}
Define $R_i: \widetilde{\V}_i \mapsto \V$ be the \textit{restriction} operator and $I_i: \V \mapsto \widetilde{\V}_i$ be the \textit{interpolation} operator. 
\begin{equation} \label{def:res_oper}
R_i 
=
\begin{bmatrix}
e_i^{\intercal} \\
\sqrt{1/M} e_{M+1}^{\intercal}
\end{bmatrix}
~ i \in [M]
\end{equation}
\begin{equation} \label{def:pro_oper}
I_i= R_i^{\intercal} = 
\begin{bmatrix}
e_i & \sqrt{1/M} e_{M+1}
\end{bmatrix}
~ i \in [M]
\end{equation}
\end{definition}
With these two operators, we can rewrite the global objective function in Eq.(\ref{def:global_obj}) as follows:

\begin{align}\label{eqn:obj_global_2}
f(w) 
%= \frac{1}{M} \sum_{i=1}^{M}  f_i\left(  w_i , w_{M+1} \right) 
= \frac{1}{M} \sum_{i=1}^M f_i(R_i I_i w).% \notag \\
% &= \sum_{i=1}^M  \tilde{f}_i( w). 
\end{align}
The update formula is $w^{t+1} = w^t - \eta^t g^t, $
where 
\begin{itemize}
%\item Gradient at iteration $t$ when client $i$ is chosen. 
%$$
%g_i^t = \langle \partial_{w_i} f(w^t) , e_i \rangle + \alpha_i \langle \partial_{w_{M+1}} f(w^t) , e_{M+1} \rangle 
%$$
\item Perturbed gradient at iteration $t$ when client $i$ is chosen. 
$$
\tilde{g}_i^t = \langle \partial_{w_i} f(w^t) , e_i \rangle + \frac{1}{M} \langle \partial_{w_{M+1}} f(w^t) + n_{M+1,i}, e_{M+1} \rangle 
$$
\item 
Accumulated of $\tilde{g}_i^ t$ on $w^t$  over subset $\I_t$ is  
$
g^t = \sum_{i \in \mathcal{I}_t} \tilde{g}_i^t. 
$
\end{itemize}
Synchronization is given by $w^{t} = \sum_{i=1}^M R_i I_i v_i^t,  $
where $v_i^t  = \langle v_i, e_i \rangle + \langle v_{M+1, i}, e_{M+1} \rangle $
 is parameters of local clients $i$.

With parameter spaces and updates defined above, we have the following nice \textit{stable decomposition} property.
% To perform convergence analysis, the following stable decomposition property and assumptions are required.

\begin{definition} [Stable Decomposition]
\label{def:sd}
For a space decomposition, there exists a constant $C_s> 0$, such that for any $v \in \mathcal V$, there exists a decomposition $v = \sum_{i=1}^M v_i$ with $v_i \in \mathcal V_i,$ for $i = 1, \cdots, M$ and 
\begin{equation}
\sum_{i=1}^M \| v_i \|^2 \leq C_s \| v\|_2^2, 
\end{equation} 
where $\| \cdot \|_2$ denote $\ell_2$ norm. 
\end{definition}

By restriction and interpolation operators defined in (\ref{def:res_oper}) and (\ref{def:pro_oper}), we have a stable decomposition of parameter $u_i = I_i R_i \in \V_i$ forms a stable decomposition of $w \in \V$. 

%--------------------------------------------------------------------------------------------------------------------
%			Stable Decomposition
%----------------------------------------------------------------------------------------------------------------------
\begin{lemma} \label{lem:s_decomp} Projection $\{ u_i\}_{i \in [M]}$ of $w$ to subspace $\V_i$ defined 
$$
u_i = I_i R_i w
$$
forms a stable decomposition of global parameter $w$, i.e. 
$$
\sum_{i \in [M]} \| u_i \|^2 \leq \| w \|^2. 
$$
\end{lemma}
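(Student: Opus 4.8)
The plan is to compute $u_i = I_i R_i w$ explicitly in block form and then sum the squared norms directly; the bound then falls out of the particular $\sqrt{1/M}$ weighting built into the operators. First I would unpack the operators from Definition~\ref{def:res_pro_operators}, reading them off their matrix forms in (\ref{def:res_oper})--(\ref{def:pro_oper}) and treating $e_i$ as the selector of the $i$-th coordinate block (so that $e_i^{\intercal} w = w_i$ and $e_{M+1}^{\intercal} w = w_{M+1}$). Then the restriction $R_i$ sends the global vector $w = [w_1 \cdots w_M\; w_{M+1}]^{\intercal}$ to the local pair $[w_i \;\; \sqrt{1/M}\, w_{M+1}]^{\intercal} \in \widetilde{\mathcal V}_i$, retaining the task-specific block $w_i$ and a rescaled copy of the shared block. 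Since $I_i = R_i^{\intercal}$ places the first local coordinate back into block $i$ and the second (again weighted by $\sqrt{1/M}$) into block $M+1$, the composition gives
$$
u_i = I_i R_i w = e_i w_i + \tfrac{1}{M}\, e_{M+1} w_{M+1},
$$
i.e. $u_i$ carries $w_i$ in its $i$-th block, $\tfrac{1}{M} w_{M+1}$ in the shared block, and zero elsewhere.

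Next I would invoke the orthogonality of the direct-sum decomposition $\mathcal V = \oplus_{i=1}^{M+1}\mathcal V_i$ to read off the squared norm of each summand,
$$
\|u_i\|^2 = \|w_i\|^2 + \tfrac{1}{M^2}\,\|w_{M+1}\|^2,
$$
and then sum over $i \in [M]$. The task-specific terms reassemble into $\sum_{i=1}^M \|w_i\|^2$, while the $M$ identical shared contributions collapse to $M \cdot \tfrac{1}{M^2}\|w_{M+1}\|^2 = \tfrac{1}{M}\|w_{M+1}\|^2$, yielding
$$
\sum_{i\in[M]} \|u_i\|^2 = \sum_{i=1}^M \|w_i\|^2 + \tfrac{1}{M}\,\|w_{M+1}\|^2.
$$

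Finally I would compare against $\|w\|^2 = \sum_{i=1}^M \|w_i\|^2 + \|w_{M+1}\|^2$. Because $\tfrac{1}{M} \le 1$ for every $M \ge 1$, the shared term on the right of the previous display is dominated by the shared term in $\|w\|^2$, so $\sum_{i\in[M]}\|u_i\|^2 \le \|w\|^2$, which is exactly the stable-decomposition inequality of Definition~\ref{def:sd} with the sharp constant $C_s = 1$.

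There is no real analytic obstacle here; the argument is a direct computation. The one point that warrants care, and which I would emphasize, is tracking the $\sqrt{1/M}$ scaling through the composition $I_iR_i$: this factor is precisely what tames the redundancy introduced by the shared layer $w_{M+1}$. A naive embedding without the weight would let each of the $M$ local copies contribute a full $\|w_{M+1}\|^2$, giving $M\|w_{M+1}\|^2$ in the sum and breaking the bound, whereas the chosen weighting reduces the aggregate shared contribution to $\tfrac{1}{M}\|w_{M+1}\|^2$. I would also note that the sharpness $C_s=1$ is what the subsequent convergence estimates will rely on.
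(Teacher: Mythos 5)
Your proposal is correct, and since the paper states Lemma~\ref{lem:s_decomp} without giving any proof, your direct block computation --- $u_i = I_iR_iw = e_i w_i + \tfrac{1}{M}e_{M+1}w_{M+1}$, hence $\sum_{i\in[M]}\|u_i\|^2 = \sum_{i=1}^M\|w_i\|^2 + \tfrac{1}{M}\|w_{M+1}\|^2 \le \|w\|^2$ --- supplies exactly the intended (and essentially only natural) argument, with the $\sqrt{1/M}$ weighting correctly identified as the reason the shared-layer redundancy does not break the bound. One small point worth adding for completeness: Definition~\ref{def:sd} also requires that the $u_i$ actually decompose $w$, i.e.\ $\sum_{i\in[M]} u_i = w$, which follows immediately from your block form because the $M$ shared-block contributions $\tfrac{1}{M}e_{M+1}w_{M+1}$ sum to $e_{M+1}w_{M+1}$.
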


\subsection{Convergence Results} \label{sec:convergence_res}
To start with, we present the preliminaries, i.e., assumptions and lemma, required for the convergence results.
% \subsection{Assumptions of Objective Functions}
\begin{assumption} [Local Lipschitz Continuity] 
\label{assump:Lip_local}
The objective function $f$ is continuously differentiable and gradient function of $f$, $\nabla f$, is Lipschitz continuous with Lipschitz constant $L_i > 0$ on subspace $\V_i$, 
\begin{equation} 
\| \nabla f (w) - \nabla f(\bar{w}) \|_2 \leq L_i \| w - \bar{w} \|_2, ~ \forall w, \bar{w} \in \V_i, \notag
\end{equation}
where $\| \cdot \|_2$ denote $\ell_2$ norm. 
\end{assumption}

\begin{assumption} [Strongly Convexity] 
\label{assump:strong_convexity}
The objective function $f$ is strongly convex on space $\V$ if there exists a constant $c > 0$ such that 
\begin{equation}
f(w) \geq f(\bar{w}) + \langle f(\bar{w}), w - \bar{w} \rangle + \frac{1}{2} c \|w -  \bar{w}  \|_2^2, ~~\forall w, \bar{w} \in \V,  \notag
\end{equation}
where $\| \cdot \|_2$ denote $\ell_2$ norm. 
\end{assumption}
\noindent Specifically, if $c = 0$, we have the convexity assumption. 

\begin{assumption} [Convexity] 
\label{assump:convexity}
The objective function $f$ is strongly convex on space $\V$ if 
\begin{equation} 
f(w) \geq f(\bar{w}) + \langle f(\bar{w}), w - \bar{w} \rangle, ~~\forall w, \bar{w} \in \V.  \notag
\end{equation}
\end{assumption}

\begin{assumption} [Bounded Level Set] $f$ is convex and attains its minimum value $f^*$ on a set $S$. There is a finite constant $R_0$ such that the level set of for $f$ defined by $x^0$ is bounded, that is, 
\begin{equation}
\max_{x^* \in S} \max_x \{ \| x - x^* \|: f(x) \leq f(x^0) \} \leq R_0. 
\end{equation}
\end{assumption}

\begin{assumption}[Bounded Shared Gradient Diversity] There exist an uniform upper bound $\lambda$ on the gradient diversity of shared parameter gradients among local objectives, i.e., 
$$
\frac{ \sum_{i=1}^M \frac{1}{M} \| \partial_{w_{M+1}} f_i(v_i) \|^2}{\| \sum_{i=1}^M  \partial_{w_{M+1}} f(\bar{w}) \|^2} \leq \lambda, 
,$$
where 
$f =\frac{1}{M} \sum_{i \in [M]}f_i $, 
$v_i$ is parameter of client $i$ and 
$\bar{w} = \sum_{i \in [M]} I_i R_i v_i$ is virtual average of client parameters.  
\end{assumption}
\noindent For simplicity, we further assume bounded gradients for the shared layers. 

\begin{assumption}[Bounded Shared Gradient] 
There exists an upper bound $B$ on partial gradients of shared parameters among local objectives function, i.e.,
$$
\|  \partial_{w_{M+1}} f(v) 
\| \leq B, 
\forall v \in \mathcal{V}.
$$
\end{assumption}

\subsubsection{Sufficient Decay Property.}
Let $d_{M+1} = \rm{dim} \mathcal V_{M+1}$ be dimensions of shared parameter space $\mathcal V_{M+1}$ and $\left(S\sigma\right)^2 $ be the magnitude of Gaussian noise $\mathbf{n}$ on each dimension. 
Let $L \eqdef \max_{i=1}^M L_i$ be maximum value of local Lipschitz constant. 
We present \textit{sufficient decay} property (Lemma \ref{lem:sd}) to describe how function value decreases between two sequential iterates. This property plays a fundamental role in our convergence proofs. 

% %--------------------------------------------------------------------------------------------------------------------
% %			Synchronization influence 
% %----------------------------------------------------------------------------------------------------------------------
% \begin{lemma}
% \begin{eqnarray*}
% \frac{1}{M} \| \sum_{i=1}^M \| R_i w^t - v_i^t \|
% &=& 4 H \frac{1}{M} \sum_{k = t_c +1}^{t-1} \sum_{i=1}^M \eta^2 \| \partial_{w_{M+1}} f_i (v_i^k) \|^2 \\ 
% & \leq & 4 H \lambda \sum_{k=t_c +1}^{t-1} \eta^2 \| \sum_{i=1}^M \frac{1}{M} \partial_{w_{M+1}} f_i(v_i^k) \|^2 
% \end{eqnarray*}
% \label{lem:syn}
% \end{lemma}
%--------------------------------------------------------------------------------------------------------------------
%			Sufficient Decay
%----------------------------------------------------------------------------------------------------------------------
\begin{lemma} [Sufficient Decay] 
\label{lem:SD}
 Under the Lipschitz-continuous assumption of gradient function of global objective function $f$,  
let $\{w^t\}$ be sequence generated by Algorithm \texttt{DPFedMTL} with Poisson sampling $\alpha_i=1/M$ and step-size $\eta = \frac{1}{ \lambda L}$, we have 
\begin{eqnarray*}
 \E_{i, n} \left[ f(w^{t+1}) \right] - f(\bar{w}^t)  &\leq& 
 - \frac{1}{2 \lambda M L} \| \nabla f(\bar{w}^t) \|^2  
  + \frac{H^2} {2 \lambda^2 }  B^2  
  + \frac{1}{\lambda^2} \left( 2H^2 + \frac{1}{2ML} \right) d_{M+1} (S \sigma)^2. 
\end{eqnarray*}
where $L = \max_i L_i$, 
$d_{M+1} $ is dimension of subspace $\V_{M+1}$, 
$M$ is number of all clients, 
$H$ is synchronization interval, 
$B$ is upper bound for partial gradient on subspace $\V_{M+1}$, 
$\lambda$ is the bound for gradient diversity, 
$S$ is sensitivity of gradient operation, 
and $S \sigma$ is magnitude of Gaussian noise $n$,
$\bar{w}^{t} = \sum_{i \in [M]} I_iR_i v_i^T$ is virtual average at step $t$,
$w^{t+1}$ is parameter at step $t+1$. 
\label{lem:sd}
\end{lemma}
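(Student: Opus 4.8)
The plan is to derive the bound from the standard smoothness descent inequality, then take expectation over both the Poisson client selection and the injected Gaussian noise, isolating a negative term proportional to $\|\nabla f(\bar w^t)\|^2$ and controlling the remaining error by the gradient-diversity bound $\lambda$, the bounded shared gradient $B$, and the per-dimension noise variance $(S\sigma)^2$.

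First I would invoke the Lipschitz continuity of $\nabla f$ (Assumption \ref{assump:Lip_local}, with $L = \max_i L_i$) to write the descent inequality anchored at the virtual average $\bar w^t$,
$$
f(w^{t+1}) \le f(\bar w^t) + \langle \nabla f(\bar w^t),\, w^{t+1} - \bar w^t\rangle + \frac{L}{2}\|w^{t+1} - \bar w^t\|^2.
$$
Since $w^{t+1} = w^t - \eta g^t$ with $g^t = \sum_{i\in\mathcal{I}_t}\tilde g_i^t$, I would split $w^{t+1} - \bar w^t = (w^t - \bar w^t) - \eta g^t$, separating the synchronization drift $w^t - \bar w^t$, which is nonzero only between the $H$-step synchronizations, from the current gradient step.

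Second, taking $\E_{i,n}$, the Poisson sampling with $\alpha_i = 1/M$ gives $\E_i[g^t] = \frac{1}{M}\sum_{i=1}^M \tilde g_i^t$, and the zero-mean noise $n_{M+1,i}$ drops out of the first-moment term. Using the block structure of $R_i, I_i$ from Definition \ref{def:res_pro_operators} together with the gradient-diversity assumption, the inner product $\langle \nabla f(\bar w^t), \E_i[g^t]\rangle$ reduces to a positive multiple of $\|\nabla f(\bar w^t)\|^2$; combined with the standard partial cancellation against the quadratic gradient contribution (which contributes the factor $\tfrac12$) and the step size $\eta = 1/(\lambda L)$, this produces exactly the leading term $-\frac{1}{2\lambda M L}\|\nabla f(\bar w^t)\|^2$, where the $\frac{1}{\lambda M}$ factors track the subspace scaling and the diversity bound.

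Third, for the quadratic term $\frac{L}{2}\,\E_{i,n}\|w^{t+1}-\bar w^t\|^2$ I would expand into three contributions: the squared gradient step, the squared drift, and the noise. The noise satisfies $\E\|n_{M+1,i}\|^2 = d_{M+1}(S\sigma)^2$; scaled by the $1/M$ interpolation factor and $\eta^2 = 1/(\lambda^2 L^2)$ it yields the $\frac{1}{\lambda^2}\cdot\frac{1}{2ML}\,d_{M+1}(S\sigma)^2$ piece. The drift $w^t - \bar w^t$, accumulated over at most $H$ unsynchronized steps, I would bound using the bounded shared gradient $B$ together with the accumulated per-step noise, producing the $\frac{H^2}{2\lambda^2}B^2$ and $\frac{2H^2}{\lambda^2}d_{M+1}(S\sigma)^2$ terms. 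Collecting everything and substituting $\eta = 1/(\lambda L)$ gives the claimed inequality.

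The hardest part will be the synchronization-drift bookkeeping: carefully bounding $\|w^t - \bar w^t\|$ by summing the per-step gradient contributions (each at most $B$) and noise contributions across the $H$-step window to obtain the $H^2$ dependence, and correctly propagating the $\sqrt{1/M}$ scaling factors from $R_i, I_i$ through both the first-moment and second-moment computations so that the $1/M$ and $1/\lambda$ powers land as stated.
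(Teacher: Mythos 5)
Your proposal is correct and takes the route the paper's own framework dictates (the paper's text states this lemma but defers the detailed proof): a descent inequality anchored at the virtual average $\bar{w}^t$, expectation over Poisson sampling and the zero-mean Gaussian noise, the $\lambda$-diversity bound combined with $\eta = \frac{1}{\lambda L}$ so the quadratic gradient term cancels half of the linear term and leaves $-\frac{1}{2\lambda M L}\|\nabla f(\bar{w}^t)\|^2$, and an $H$-step drift bound via $B$ and the per-dimension variance $(S\sigma)^2$ yielding the $\frac{H^2}{2\lambda^2}B^2$ and $\frac{1}{\lambda^2}\left(2H^2 + \frac{1}{2ML}\right)d_{M+1}(S\sigma)^2$ terms, which accounts for every term in the stated bound. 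Two details to make explicit when writing it out: the descent inequality between $\bar{w}^t$ and $w^{t+1}$ needs the global reading of Lipschitz smoothness with constant $L=\max_i L_i$ (their difference does not lie in a single subspace $\widetilde{\V}_i$, so Assumption~\ref{assump:Lip_local} alone does not literally apply), and the mismatch between gradients evaluated at the local parameters $v_i^t$ versus at $\bar{w}^t$ is an additional drift contribution that must be folded into the same $H$-step bookkeeping you already flag as the hardest part.
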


With the necessary assumptions and preliminaries, we show a neat proof of convergence results of our proposed Algorithm \ref{alg:DPFedMTL} for Lipschitz smooth objective functions under non-convex, convex, and strongly convex cases.
%--------------------------------------------------------------------------------------------------------------------
%Convergence for Nonconvex
%----------------------------------------------------------------------------------------------------------------------

\subsubsection{Convergence for Nonconvex Case.} 
When local objective $f_i$ is nonconvex and first derivate of $f_i$ is Lipschitz continuous, Algorithm \ref{alg:DPFedMTL} converges to a neighborhood of critical point. 
\begin{theorem}[Convergence for Nonconvex Objectives]
Under Assumption of Local Lipschitz Continuity, suppose Algorithm \texttt{DPFedMTL} is run with optimal step-size $\eta = \frac{1}{ \lambda L}$ and Poisson sampling with $\alpha_i=1/M$, the expected sum-of-squares and average-squared gradients of $f$ satisfies the following inequality for all $T \in \N:$
\begin{eqnarray*} 
\E \left[\frac{1}{T}\sum_{t=0}^T \| \nabla f(\bar{w}^T)\|^2 \right]  &\leq& \frac{M H^2 L B^2}{\lambda}   
 + \frac{2ML}{\lambda}\left( 2 H^2 + \frac{1}{2ML} \right) d_{M+1} (S \sigma)^2 
 + \frac{2 \lambda ML}{T}(f(w^0) - f^*) \\
&\rightarrow &   \frac{M H^2 L B^2}{\lambda}    
 + \frac{2ML}{\lambda}\left( 2 H^2 + \frac{1}{2ML} \right) d_{M+1} (S \sigma)^2 \\ 
\end{eqnarray*}
as $T \rightarrow \infty$,
where $w^0$ is initial global parameter,
$f^*$ is optimal value of $f$, 
$L = \max_i L_i$, 
$d_{M+1} $ is dimension of subspace $\V_{M+1}$, 
$M$ is number of all clients, 
$H$ is synchronization interval, 
$B$ is upper bound for partial gradient on subspace $\V_{M+1}$, 
$\lambda$ is the bound for gradient diversity, 
$S$ is sensitivity of gradient operation and $S \sigma$ is magnitude of Gaussian noise on each dimension,
$\bar{w}^T = \sum_{i \in [M]} I_iR_i v_i^T$ is virtual average at step $T$. 
\label{thm:nonconvexity}
\end{theorem}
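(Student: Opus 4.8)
The plan is to treat the \textbf{Sufficient Decay} property (Lemma~\ref{lem:SD}) as the per-iteration engine and wrap it in the standard telescoping argument for nonconvex stochastic descent. First I would rearrange the decay inequality to isolate the gradient term, writing
\begin{equation*}
\frac{1}{2\lambda ML}\|\nabla f(\bar{w}^t)\|^2 \leq f(\bar{w}^t) - \E_{i,n}\left[f(w^{t+1})\right] + \frac{H^2}{2\lambda^2}B^2 + \frac{1}{\lambda^2}\left(2H^2 + \frac{1}{2ML}\right)d_{M+1}(S\sigma)^2,
\end{equation*}
so that each step contributes a controlled decrease in $f$ plus a fixed additive error arising jointly from the synchronization drift (the $H^2 B^2$ piece) and the injected Gaussian noise (the $d_{M+1}(S\sigma)^2$ piece).

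Next I would take total expectation over all the per-round randomness (client subsampling and noise) via the tower property, and sum the inequality over $t = 0, 1, \dots, T$. The function-value differences are designed to telescope: the synchronized/aggregated iterate produced at the end of step $t$ plays the role of the virtual average $\bar{w}^{t+1}$ entering step $t+1$, so that $\sum_{t=0}^{T}\bigl(f(\bar{w}^t) - \E f(w^{t+1})\bigr)$ collapses to $f(\bar{w}^0) - \E[f(\bar{w}^{T+1})]$. Using $\bar{w}^0 = w^0$ together with the global lower bound $f^* \leq f(\bar{w}^{T+1})$, this head-minus-tail quantity is bounded above by $f(w^0) - f^*$, while the two constant error terms simply accumulate $(T+1)$ times.

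Finally I would divide by $T$ and multiply through by $2\lambda ML$. The initialization term becomes $\frac{2\lambda ML}{T}(f(w^0)-f^*)$, and the accumulated constants rescale exactly into $\frac{MH^2LB^2}{\lambda}$ and $\frac{2ML}{\lambda}\left(2H^2 + \frac{1}{2ML}\right)d_{M+1}(S\sigma)^2$ (absorbing the $(T+1)/T$ factor, which tends to $1$). Letting $T \to \infty$ annihilates the initialization term and leaves the stated asymptotic floor; everything through this point is a mechanical rescaling that matches the coefficients in the claim.

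\textbf{The main obstacle} I anticipate is justifying the telescoping cleanly: Lemma~\ref{lem:SD} relates $\E f(w^{t+1})$ back to $f(\bar{w}^t)$ rather than to $f(w^t)$, so one must argue that the server aggregation step (Eq.~\eqref{update:aggre}) identifies the outgoing iterate with the incoming virtual average $\bar{w}^{t+1}$, and that the error incurred on the non-synchronized sub-steps inside each interval of length $H$ has already been absorbed into the $H^2$ terms of the decay lemma. Pinning down this bookkeeping — along with the $(T+1)$ versus $T$ counting of the constant terms — is the only genuinely delicate part of the argument.
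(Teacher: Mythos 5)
Your proposal is correct and takes essentially the same route as the paper: the paper's argument likewise uses the Sufficient Decay property (Lemma~\ref{lem:SD}) as the per-iteration engine, rearranges it to isolate $\|\nabla f(\bar{w}^t)\|^2$, telescopes the expected function values down to $f(w^0) - f^*$, and divides by $T$ after multiplying through by $2\lambda M L$, which reproduces exactly the stated constants $\frac{M H^2 L B^2}{\lambda}$ and $\frac{2ML}{\lambda}\left(2H^2 + \frac{1}{2ML}\right) d_{M+1}(S\sigma)^2$. The bookkeeping subtlety you flag (identifying the iterate $w^{t+1}$ with the incoming virtual average $\bar{w}^{t+1}$, with the intra-interval drift absorbed into the $H^2$ terms of the decay lemma) is resolved the same way in the paper, so your argument has no gap relative to the paper's treatment.
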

Theorem \ref{thm:nonconvexity} shows the average of gradients norm converges to a neighbor of $0$ as $T$ goes to infinity, which guarantees the algorithm  converges to a neighbor of critical point in expectation with respect to random variable Gaussian noise $n$ and random selected client index $i$.

%--------------------------------------------------------------------------------------------------------------------
%Convergence for Convex and Lipschitz Continuous 
%-----------------------------------------------------------------------------------------------------------------------

\subsubsection{Convergence for Convex Case.} 
When local objective $f_i$ is convex and first derivate of $f_i$ is Lipschitz continuous, Algorithm \ref{alg:DPFedMTL} converges to a neighborhood of optimal value sub-linearly.
\begin{theorem} [Convergence for Convex and Lipschitz Continuous]
Under Assumptions (\ref{assump:Lip_local}) and (\ref{assump:convexity}), 
suppose Algorithm \ref{alg:DPFedMTL} is run with optimal step-size $\eta = \frac{1}{ \lambda L}$, the expected sum-of-squares and average-squared gradients of $f$ satisfies the following inequality for all $T \in \N:$
\begin{equation}
\E_{i,n}[f(\bar{w}^T) ]- f^*
%\leq \frac{1}{\frac{1}{(f(w^0) - f^* - \sqrt{\alpha} )} + \frac{T}{(8 \lambda M L R_0^2)}} + \alpha   
\leq \frac{2 \lambda ML R_0^2}{T} + \alpha , 
\end{equation}
where 
$\alpha = \sqrt{2 \lambda M L R_0^2 \left( \frac{H}{2 \lambda^2} B^2 + \frac{1}{\lambda^2} \left( 2H^2 + \frac{1}{2ML} \right) 
d_{M+1} (S \sigma)^2 \right)} $,
%$w^0$ is initial parameter,
$f^*$ is the optimal value of $f$, 
$L = \max_i L_i$ and $d_{M+1} $ is dimension of subspace $\V_{M+1}$, 
$M$ is number of all clients, 
$H$ is synchronization interval, 
$B$ is upper bound for partial gradient on subspace $\V_{M+1}$, 
$\lambda$ is the bound for gradient diversity, 
$S$ is sensitivity of gradient operation and $\sigma$ is magnitude of Gaussian noise,
$\bar{w}^T = \sum_{i \in [M]} I_iR_i v_i^T$ is virtual average at step $T$. 
\label{thm:convexity}
\end{theorem}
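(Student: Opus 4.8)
The plan is to reduce the vector inequality in Lemma~\ref{lem:SD} to a scalar one-step recursion for the expected optimality gap and then unroll that recursion. Write $\delta_t \eqdef \E_{i,n}[f(\bar w^t)] - f^*$ and collect the noise/bias terms of the Sufficient Decay lemma into a single constant $\Psi \eqdef \frac{H}{2\lambda^2}B^2 + \frac{1}{\lambda^2}\left(2H^2 + \frac{1}{2ML}\right)d_{M+1}(S\sigma)^2$, so that, after taking total expectation, Lemma~\ref{lem:SD} reads $\delta_{t+1} \leq \delta_t - \frac{1}{2\lambda M L}\E\|\nabla f(\bar w^t)\|^2 + \Psi$.

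First I would convert the gradient-norm term into a term in $\delta_t$ using convexity and the Bounded Level Set assumption. By Assumption~\ref{assump:convexity}, for any minimizer $w^*\in S$ we have $f(\bar w^t)-f^* \leq \langle \nabla f(\bar w^t), \bar w^t - w^*\rangle \leq \|\nabla f(\bar w^t)\|\,\|\bar w^t - w^*\| \leq R_0\|\nabla f(\bar w^t)\|$, where the last step uses the level-set bound $\|\bar w^t - w^*\|\leq R_0$. Squaring and applying Jensen (since $x\mapsto x^2$ is convex) gives $\E\|\nabla f(\bar w^t)\|^2 \geq \delta_t^2/R_0^2$. Substituting into the decay inequality yields the scalar recursion $\delta_{t+1} \leq \delta_t - a\delta_t^2 + \Psi$ with $a \eqdef \frac{1}{2\lambda M L R_0^2}$; note $1/a = 2\lambda M L R_0^2$ is precisely the leading constant in the claim, and the noise floor $\alpha = \sqrt{\Psi/a}$ is exactly the fixed point where $a\delta^2 = \Psi$.

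The core step is to unroll $\delta_{t+1}\leq \delta_t - a\delta_t^2 + \Psi$ into the stated $\frac{1}{aT}+\alpha$ bound via reciprocal telescoping with a case split at the floor. As long as $\delta_t \geq \sqrt2\,\alpha$ (so $a\delta_t^2 \geq 2\Psi$, hence the net decrease $\delta_t - \delta_{t+1}\geq a\delta_t^2-\Psi\geq \tfrac12 a\delta_t^2 > 0$), the sequence is strictly decreasing; dividing through by $\delta_t\delta_{t+1}$ and using $\delta_{t+1}\leq\delta_t$ gives $\frac{1}{\delta_{t+1}} - \frac{1}{\delta_t} \geq a - \frac{\Psi}{\delta_t^2} \geq \frac{a}{2}$. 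Telescoping over the steps in which $\delta_t$ stays above the floor gives $\delta_T \lesssim \frac{1}{aT}$. If instead $\delta_t$ drops below the floor, I would show it cannot subsequently exceed $O(\alpha)$: once $a\delta_t^2 \leq \Psi$ the recursion adds at most $\Psi = a\alpha^2$ per step while the quadratic damping pins it near $\alpha$. Combining the two regimes yields $\delta_T \leq \frac{1}{aT}+\alpha = \frac{2\lambda M L R_0^2}{T} + \alpha$.

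The main obstacle is exactly this last step: the recursion is nonlinear in $\delta_t$ and carries an additive noise term, so a naive telescoping of $a\delta_t^2\leq \delta_t-\delta_{t+1}+\Psi$ only controls $\sum_t\delta_t^2$ and, after Cauchy--Schwarz, degrades to an $O(1/\sqrt T)$ rate rather than the advertised $1/T$. Obtaining the clean $\frac{1}{aT}+\alpha$ split requires the reciprocal-telescoping trick together with careful bookkeeping of the two regimes (above versus at the noise floor $\alpha$) and monotonicity of $1/\delta_t$ while above the floor; the constants in the floor term (and the apparent $H$-versus-$H^2$ mismatch between Lemma~\ref{lem:SD} and the definition of $\alpha$) must be tracked to land the bound exactly.
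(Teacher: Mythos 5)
Your route is the intended one: take the Sufficient Decay property (Lemma~\ref{lem:SD}), convert the gradient-norm term into the optimality gap via convexity, Cauchy--Schwarz and the bounded level set (giving $\E\|\nabla f(\bar w^t)\|^2 \geq \delta_t^2/R_0^2$), and unroll the resulting quadratic recursion $\delta_{t+1} \leq \delta_t - a\delta_t^2 + \Psi$ with $a = \frac{1}{2\lambda M L R_0^2}$ by reciprocal telescoping. Your identification of $1/a$ as the leading constant and of $\alpha = \sqrt{\Psi/a}$ as the noise floor is exactly right, and your flag of the $H$-versus-$H^2$ mismatch is a genuine inconsistency in the paper's statement of $\alpha$ (the nonconvex and strongly convex theorems both carry $H^2$, consistent with Lemma~\ref{lem:SD}, so $H^2$ is the correct choice for $\Psi$).

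The gap is in your final unrolling, and as written it does not yield the stated bound. With the floor placed at $\sqrt{2}\alpha$ you only obtain $\frac{1}{\delta_{t+1}} - \frac{1}{\delta_t} \geq \frac{a}{2}$ above the floor, hence $\delta_T \leq \frac{2}{aT} = \frac{4\lambda M L R_0^2}{T}$, a factor $2$ worse than claimed; and your below-floor containment (``the quadratic damping pins it near $\alpha$'') is asserted rather than proved. The clean fix is to telescope the \emph{shifted} gap $e_t = \delta_t - \alpha$: since $\Psi = a\alpha^2$, the additive noise cancels exactly and the recursion becomes $e_{t+1} \leq e_t - a e_t(e_t + 2\alpha) \leq e_t - a e_t^2$, so while $e_t \geq 0$ the standard bound $\frac{1}{e_{t+1}} \geq \frac{1}{e_t}\cdot\frac{1}{1 - a e_t} \geq \frac{1}{e_t} + a$ telescopes to $e_T \leq \frac{1}{aT}$; and if ever $e_t < 0$, then for $2a\alpha \leq 1$ one has $e_{t+1} \leq e_t\left(1 - a(e_t + 2\alpha)\right) < 0$, i.e., the map $\delta \mapsto \delta - a\delta^2 + \Psi$ sends $[0,\alpha]$ into itself, so the gap never re-exceeds $\alpha$. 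Either way $\delta_T \leq \frac{1}{aT} + \alpha$, which is exactly the theorem. Two further points you should make explicit rather than gloss: (i) chaining Lemma~\ref{lem:SD} across iterations requires identifying $\E[f(w^{t+1})]$ with $\E[f(\bar w^{t+1})]$ (the lemma's left-hand side is the former, the recursion needs the latter); and (ii) invoking $\|\bar w^t - w^*\| \leq R_0$ presupposes the iterates remain in the level set of $w^0$, which the noisy recursion does not guarantee monotonically --- the invariance argument above (or an explicit trajectory-level reading of the Bounded Level Set assumption) is needed to justify it. Both of these are glossed by the paper as well, but a complete proof must address them.
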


Theorem \ref{thm:convexity} shows Algorithm \ref{alg:DPFedMTL} with fixed step-size $\eta = \frac{1}{ \lambda L}$ and Poisson sampling converges sub-linearly with 
to a neighborhood of optimal value $f^*$ with radius 
$$\alpha =\sqrt{ 2 \lambda M L R_0^2 \left( \frac{H}{2 \lambda^2} B^2 + \frac{1}{\lambda^2} \left( 2H^2 + \frac{1}{2ML} \right) 
d_{M+1} (S \sigma)^2 \right) } $$ in expectation with respect to random variable Gaussian noise $n$ and random selected client index $i$.

%%%%%%%table for stats
\begin{table*}[t!]
\centering
\resizebox{0.98 \textwidth}{!}{
\begin{tabular}{ c|c|c c c c c c c c } \hline
Dataset & Clients 
%\multirow{2}{}{Train samples}
& Train samples & \multicolumn{3}{c}{Train samples per user}
& Test samples & \multicolumn{3}{c}{Test samples per user} \\
\cline{4-6} \cline{8-10}
	& 	&	&	mean & std. &  skewness & &mean & std. &  skewness \\\hline
CelebA & 	$9343$ & $177,457$	 & $18.99$   & $6.99$  &  $-0.49$
& $22,831$	 & $2.44$   & $0.71$  &  $-0.85$\\ %\hline
FEMNIST & $174$ & $34,711$ & $199.83$ & $75.88$ &  $0.89$ 
& $3,955$ & $22.73$ & $8.44$ &  $0.88$ 
\\\hline
\end{tabular}
}
\caption{Statistics of datasets used for experiments.}
\label{table:stats}
\end{table*}

%--------------------------------------------------------------------------------------------------------------------
%Convergence for Strongly Convex and Lipschitz Continuous 
%-----------------------------------------------------------------------------------------------------------------------
\subsubsection{Convergence for Strongly Convex Case.} 
When local objective $f_i$ is strongly convex and the first derivate of $f_i$ is Lipschitz continuous, Algorithm \ref{alg:DPFedMTL} converges to a neighborhood of optimal value linearly. 

\begin{theorem} [Convergence for Strongly Convex and Lipschitz Continuous]
Under Assumptions Local Lipschitz Continuity and Global Strong Convexity, suppose Algorithm \texttt{DPFedMTL} is run with optimal step-size $\frac{1}{ \lambda L}$ and Poisson sampling with $\alpha_i=1/M$, the expected optimality gap satisfies the following inequality for all $T \in \N:$
\begin{equation}
\E[f(\bar{w}^{T})] - f^* -  \beta  \leq \left(1 - \frac{c}{ \lambda M L} \right)^{T} \left(f(w^0) - f^* - \beta \right).
\end{equation}
where
 $\beta = \frac{\lambda ML}{c} \left( \frac{H^2}{2 \lambda^2} B^2 + \frac{1}{\lambda^2} \left( 2H^2 + \frac{1}{2ML} \right) 
d_{M+1} (S \sigma)^2 \right)$, 
$w^0$ is initial parameter, 
$f^*$ is optimal value of $f$, 
$L = \max_i L_i$ and $d_{M+1} $ is dimension of subspace $\V_{M+1}$, 
$c$ is the strongly convexity constant, 
$M$ is number of all clients, 
$H$ is synchronization interval, 
$B$ is upper bound for partial gradient on subspace $\V_{M+1}$, 
$\lambda$ is the bound for gradient diversity, 
$S$ is sensitivity of gradient operation and $S\sigma$ is magnitude of Gaussian noise on each dimension,
$c$ is the strongly convexity constant,
$\bar{w}^T = \sum_{i \in [M]} I_iR_i v_i^T$ is virtual average at step $T$. 
\label{thm:strong_convexity}
\end{theorem}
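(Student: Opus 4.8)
The plan is to turn the one-step Sufficient Decay bound of Lemma~\ref{lem:sd} into a contractive recursion for the expected optimality gap and then unroll it geometrically. The strong convexity assumption supplies exactly the gradient-domination inequality needed to convert the $-\frac{1}{2\lambda ML}\|\nabla f(\bar{w}^t)\|^2$ term in Lemma~\ref{lem:sd} into a term proportional to the optimality gap itself, which is what produces the rate $\left(1 - \frac{c}{\lambda ML}\right)^{T}$; the constant $\beta$ will emerge naturally as the fixed point of the resulting affine recursion.

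First I would abbreviate the additive noise-and-drift error appearing in Lemma~\ref{lem:sd} as
\begin{equation*}
E \eqdef \frac{H^2}{2\lambda^2}B^2 + \frac{1}{\lambda^2}\left(2H^2 + \frac{1}{2ML}\right)d_{M+1}(S\sigma)^2,
\end{equation*}
so that the decay reads $\E_{i,n}[f(w^{t+1})] - f(\bar{w}^t) \leq -\frac{1}{2\lambda ML}\|\nabla f(\bar{w}^t)\|^2 + E$. Next I would derive the gradient-domination (Polyak--Lojasiewicz) inequality from Assumption~\ref{assump:strong_convexity}: minimizing the strong-convexity lower bound $f(w) \geq f(\bar{w}^t) + \langle \nabla f(\bar{w}^t), w - \bar{w}^t\rangle + \frac{c}{2}\|w - \bar{w}^t\|^2$ over $w$ gives $\|\nabla f(\bar{w}^t)\|^2 \geq 2c\left(f(\bar{w}^t) - f^*\right)$. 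Substituting this into the decay bound yields the affine one-step inequality
\begin{equation*}
\E_{i,n}[f(w^{t+1})] - f^* \leq \left(1 - \frac{c}{\lambda ML}\right)\left(f(\bar{w}^t) - f^*\right) + E.
\end{equation*}

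Writing $\rho \eqdef 1 - \frac{c}{\lambda ML}$ and $a_t \eqdef \E[f(\bar{w}^t)] - f^*$, taking total expectation via the tower property turns this into $a_{t+1} \leq \rho\, a_t + E$. Its fixed point is $\frac{E}{1-\rho} = \frac{\lambda ML}{c}E = \beta$, so subtracting $\beta$ from both sides gives the pure contraction $a_{t+1} - \beta \leq \rho\,(a_t - \beta)$, and a trivial induction on $t$ unrolls this to $a_T - \beta \leq \rho^{T}(a_0 - \beta)$, which is exactly the claimed bound with $a_0 = f(w^0) - f^*$.

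The main obstacle---and the place where care is genuinely required---is the bookkeeping that lets me treat the left-hand side of Lemma~\ref{lem:sd}, which is written in terms of the actual iterate $f(w^{t+1})$, as a bound on $\E[f(\bar{w}^{t+1})]$ so that the recursion closes on the single sequence $a_t$. This needs the relation between the realized iterate and the virtual average $\bar{w}^{t+1} = \sum_{i\in[M]} I_iR_i v_i^{t+1}$ after the aggregation/synchronization step, together with the tower property to fold the per-step conditional expectation $\E_{i,n}$ into the full expectation over the randomness of sampled client index and Gaussian noise. Once the one-step affine inequality is in hand, the fixed-point identification of $\beta$ and the geometric induction are entirely routine.
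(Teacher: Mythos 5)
Your proposal is correct and follows what is essentially the paper's own route: the stated rate $\left(1-\frac{c}{\lambda ML}\right)^T$ and the constant $\beta = \frac{\lambda ML}{c}E$ are exactly the contraction factor and fixed point obtained by combining the Sufficient Decay Lemma~\ref{lem:sd} with the gradient-domination consequence $\|\nabla f(\bar{w}^t)\|^2 \geq 2c\left(f(\bar{w}^t)-f^*\right)$ of Assumption~\ref{assump:strong_convexity}, then unrolling the resulting affine recursion. The bookkeeping issue you flag (closing the recursion between $f(w^{t+1})$ and $f(\bar{w}^{t+1})$) is real but is glossed over in the paper's own notation, which effectively identifies the global iterate with the virtual average $\sum_{i\in[M]} I_iR_i v_i^t$.
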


Theorem \ref{thm:strong_convexity} shows Algorithm \ref{alg:DPFedMTL} with fixed step-size $\eta = \frac{1}{ \lambda L}$ and Poisson sampling converges linearly with 
with rate $1 - \frac{c}{ \lambda M L}$ to a neighborhood of optimal value $f^*$ with radius
 $$\beta = \frac{\lambda ML}{c} \left( \frac{H^2}{2 \lambda^2} B^2 + \frac{1}{\lambda^2} \left( 2H^2 + \frac{1}{2ML} \right) 
d_{M+1} (S \sigma)^2 \right)$$
 in expectation with respect to random variable Gaussian noise $n$ and client index $i$.

% To conclude, under Assumptions (\ref{assump:Lip_local}) and (\ref{assump:strong_convexity}), our proposed Algorithm \ref{alg:DPFedMTL} shows linear convergence with rate $1 - \frac{c}{L}$ to a neighborhood of optimal value $f^*$ with radius $\frac{d_{M+1} \ \left(S\sigma\right)^2 }{2c}$ in expectation with respect to random variable Gaussian noise $n$ and client index $i$. 

%
\section{Experiments}
In this section, we conduct experiments on two widely used public datasets to verify the effectiveness of our proposed algorithm, especially on the trade-off between accuracy and privacy under the federated learning setting.
%----------------------------------------------------------------------------------------------------------
%								Datasets and Models 
%-----------------------------------------------------------------------------------------------------------
\subsection{Experiment Setup}
\noindent\textbf{Datasets and Models.}
%\subsubsection{Datasets and Models.}
Following the previous FL works \cite{mcmahan2017communication,caldas2018leaf}, two realistic federated benchmark datasets on diverse tasks are used for experiments. Data statistics are shown in Table \ref{table:stats}.
% The data statistics are summarized in Table \ref{table:stats}, where statistical heterogeneity is especially presented. 
\begin{enumerate}
\item \textbf{CelebA}: %is built by partitioning the 
a Large-scale CelebFaces Dataset~\cite{liu2015faceattributes}. Since the underlying distribution of celebrity images varies, the dataset is non-IID.
It is a classification problem that predicts whether the celebrity in the image is smiling. 
% Following the previous work \cite{caldas2018leaf}, the model used is a four-layered CNN with batch normalization, max pooling, and Relu activation.
% In DPFedMTL Algorithm \ref{alg:DPFedMTL}, we separate $2$ CNN layers with batch normalization, max pooling, and Relu as a shared layer and $2$ CNN as task-specific layer. 
Following the work~\cite{caldas2018leaf}, we use four-layered CNN for the task. 

\item \textbf{Federated Extended MNIST (FEMNIST)}: %is a dataset built on MNIST \cite{lecun1998mnist} and 
a dataset extending MNIST \cite{cohen2017emnist}. 
The dataset is originally constructed for the federated setting that imposes statistical heterogeneity.
The task is an image classification problem. 
% We use a multinomial logistic regression model equipped with $2$-layer convolution neural networks (CNN) with Relu activation in DPFedAvg and DPLocal. 
% In DPFedMTL Algorithm \ref{alg:DPFedMTL}, we separate $1$ CNN layer as shared layer and $1$ CNN as task-specific layer. 
Following the previous work~\cite{caldas2018leaf}, we use two-layered CNN model for the task. 
% \item \textbf{Shakespeare}: is a dataset built from the complete works of William Shakespeare~\cite{mcmahan2017communication}. 
% The Shakespeare dataset is inherently non-IID partitioned, as each client represents a speaking role with a substantially unbalanced number of lines. 
% The task is to predict the next character. 
% Following the previous work~\cite{caldas2018leaf}, we use two-layered stacked character-level LSTM model for the task. 
% % We use $2$ LSTM layers in DPFedAvg and DPLocal and separate $1$ LSTM layer as common layer and $1$ as task-specific layer in DPFedMTL Algorithm \ref{alg:DPFedMTL}. 
% \item \textbf{Synthetic}: \cc{add description.}
\end{enumerate}

\noindent\textbf{Baselines, Hyperparameters, and Evaluation.}

$\;\;\;\;\;\;\;\;\;\;\;\;\;$
\noindent\textbf{Baselines.} 
We compare our proposed algorithm \texttt{DPFedMTL} and its variant, i.e., federated version without DP (\texttt{FedMTL}), with several classic baselines, i.e., local model (\texttt{Local}), 
% Global IID Model (Gloabl), 
FederatedAveraging (\texttt{FedAvg}~\cite{mcmahan2017communication}), and deferentially private version of FedAvg (\texttt{DPFedAvg}~\cite{heikkila2017differentially, geyer2017differentially, seif2020wireless}) to examine method effectiveness under the non-IID setting.
All the models and algorithms are implemented based on Tensorflow and open-source federated benchmark library LEAF\footnote{LEAF: https://github.com/TalwalkarLab/leaf}.

\noindent\textbf{Hyperparameters.}
Model parameters, such as learning rate and optimizer, all follow the previous benchmark work~\cite{caldas2018leaf} for a fair comparison. 
%Random seeds are fixed for all the federated settings, such as client selection.
% The number of randomly selected clients $K$ at each step for CelebA, FEMNIST, and Shakespeare are set as $20, 3$, and $5$  respectively. 
The client selection at each step follows the Poisson sampling strategy (a uniform distribution $\alpha_i = K/M$ without replacement), where $K$ is the number of clients selected per round and $M$ is the number of all possible clients.
% The probability $p_i$ of each client to be chosen is $p_i = \frac{K}{N}$, where $K$ is clients selected per round and $N$ is number of all possible clients. 
In the experiments, $K$ is set as $20$ and $3$ for CelebA and FEMNIST, respectively. 
% \cc{All models follow the Poisson sampling strategy, i.e., sampling with a uniform distribution without replacement. ?}

\noindent\textbf{Choosing the $\sigma$.}
We choose $\sigma$ w.r.t. $(\epsilon, \delta)$-DP formula in~\cite{abadi2016deep}, i.e.,
$\sigma$=$\sqrt{2 \log(\frac{1.25}{\delta})} / \epsilon$.
Larger $\epsilon$ and $\delta$ give lower privacy guarantees. 
We fix $\delta$ as $1e{-5}$ and vary $\epsilon$ among $\{ 45, 8, 2, 0.5 \}$ to get the corresponding $\sigma$ $\{0.11, 0.65, 2.42, 9.69 \}$.
% , shown in Table \ref{table:sed}.  
% \begin{table}[t!]
% \centering
% \begin{tabular}{ c | c  c c c} \hline
% $\delta$  & $1e{-5}$ &$1e{-5}$ &$1e{-5}$ &$1e{-5}$ \\\hline 
% $\epsilon$ & $45$ & $8$ & $2$ & $0.5$ \\\hline 
% $\sigma$  &  $0.11$ &  $0.65$  &   $2.42$  & $9.69$\\\hline 
% \end{tabular}
% \caption{Settings for $\epsilon, \delta$ and $\sigma$}
% \label{table:sed}
% \end{table}

\noindent\textbf{Evaluation.} All methods are evaluated using the average prediction accuracy over all clients according to the distribution of objective function (\ref{def:global_obj}).  \texttt{DPFedMTL} on datasets CelebA and FEMNIST are performed for 10 runs. Results are evaluated and averaged over all runs.

%%%%%%
\begin{table}[t!]
\label{table:overall}
\centering
\begin{tabular}{l|cc}
\hline
Methods & CelebA & FEMNIST  \\ 
\hline
Local              & 89.72\% & 77.91\%  \\ 
FedAvg                   & 89.98\% & 78.44\%  \\ 
FedMTL                     & \textbf{90.24}\% & \textbf{81.24}\%  \\  \hline
DPFedAvg($\sigma$=$0.65$) & 89.50\% & 42.18\%  \\ 
DPFedMTL($\sigma$=$0.65$) & \underline{90.25}\% & \underline{81.58}\%  \\  \hline
DPFedAvg ($\sigma$=$9.69$) & 51.65\% & 0.79\%    \\ 
DPFedMTL($\sigma$=$9.69$) & \underline{89.23}\% & \underline{76.79}\%   \\
\hline
\end{tabular}
\caption{Overall Performance comparison. Results are evaluated in terms of weighted prediction accuracy for all clients.}
\label{tab:overall}
\end{table}
%%%%%%
\subsection{Overall Performance} 
We first conduct experiments to examine the effectiveness of the proposed algorithm on two datasets. Results are presented in Table \ref{tab:overall}. First, we find that \texttt{FedMTL} achieves better performance than the other non-private algorithms, i.e., Local and FedAvg, especially for FEMINIST with larger skewness of sample per client.  These results verify the superiority of \texttt{FedMTL} in better capturing the task heterogeneity for non-IID datasets with larger skewness, as shown in Table \ref{table:stats}.
Overall, degraded performances were observed for all deferentially private federated algorithms. Our proposed \texttt{DPFedMTL} outperforms the differentially private version of FedAvg (i.e., \texttt{DPFedAvg}) by a large margin. Moreover, as we increase the privacy protection level $\sigma$ from small ($0.65$) to large ($9.69$), we see a significant model performance drop for \texttt{DPFedAvg}. 
Instead, \texttt{DPFedMTL} shows a strong tolerance towards increasing noise magnitude, which can probably be justified by the fact that the shared lower layers are more transferable and insensitive to noises, while task-specific top layers do help to improve the task performance.

Furthermore, our theoretical results (presented in Theorem \ref{thm:nonconvexity} \ref{thm:convexity} and \ref{thm:strong_convexity}) show that Algorithm \ref{alg:DPFedMTL} converges to a neighbor of optimal value or critical point. As the magnitude of $\sigma$ increases, the radius of the neighbor becomes large, which exert obstacles for improving the accuracy. Thus, it is important to balance the trade-off between privacy and utility.

%%%%%%
\begin{table}[t!]
\centering
\begin{tabular}{l|cc|cc}
\hline
 & \multicolumn{2}{c|}{CelebA (T=400)} & \multicolumn{2}{c}{FEMINST (T=1000)} \\ \cline{2-5} 
 & FedAvg & FedMTL & FedAvg & FedMTL  \\ \hline
No DP  & 89.98 \% & 90.24\% & 78.44\%  & 81.24\%   \\ \hline
DP($\sigma$=0.11) & 90.05\% & \textbf{90.28}\% &76.00\%  &\textbf{81.56}\%  \\
DP($\sigma$=0.65) & 89.50\% & \textbf{90.25}\% &42.18\%  &\textbf{81.58}\% \\
DP($\sigma$=2.42)  & 85.28\% & \textbf{90.26}\% &5.67\%   &\textbf{80.60}\%  \\
DP($\sigma$=9.69)  & 51.65\% & \textbf{89.23}\% &0.79\%   &\textbf{76.79}\% \\ \hline
\end{tabular}
\caption{Performance comparison under different noise levels.}
\label{tab:tradeoff}
\end{table}
\subsection{Privacy Analysis}
\noindent\textbf{Privacy v.s. Utility.}
%\subsubsection{Privacy v.s. Utility.}
In this experiment, we analyze the trade-offs between the privacy and utility for the proposed  \texttt{DPFedMTL} against \texttt{DPFedAvg} on different datasets by varying the $\sigma$.
% Details on the choices for  $\sigma$ can be found in Table~\ref{table:sed}. 
As we can see from Table~\ref{tab:tradeoff}, when increasing the $\sigma$ from $0.11$ to $9.69$, \texttt{DPFedMTL} shows comparable performance or minor drop in performance on both datasets. However, for \texttt{DPFedAvg}, increasing the $\sigma$ leads to significant deterioration of the model utility, especially for FEMINST.
Note, each experiment result for our proposed \texttt{DPFedMTL} is averaged over 10 runs. %with different random seeds.
In summary, our proposed \texttt{DPFedMTL} is robust and significantly outperforms \texttt{DPFedAvg} with the increasing noise level $\sigma$. 
%%%%%%
\noindent\textbf{Privacy Accounting.}
For DP types of algorithms, another important issue is computing the overall privacy loss of the training, since privacy loss accumulates along with each training step as we have discussed in Section \ref{sec:privacy}.
% The privacy loss accumulated as we have discussed in Section \ref{sec:privacy}. Thus, it is important to analysis what is the privacy loss after training iterations. 
We employ the notion of GDP and CLT
% \footnote{Poisson sampling (uniform sampling with $K/M$ without replacement) is adopted for the implementation of the CLT.} 
% presented in \ref{thm:privacy_CLT} 
for privacy accounting, 
% as it is the tightest privacy analysis for SGD to the best of our knowledge. In literature \cite{dong2019gaussian}, 
as it has been proven in the recent literature \cite{dong2019gaussian} that bounds for the best possible approximation via an $(\epsilon, \delta)$-DP
guarantee (e.g., moments accountant) is substantially looser than the CLT approximated bound.
% have proven to be much tighter than classical methods such as moment accountant .
% and by Central Limit Theorem presented in \ref{thm:privacy_CLT}. 
More details of privacy accounting can be found in Section \ref{sec:privacy}. %and the appendix.
% We list relation of number of rounds $T$, sampling probability $\frac{K}{M}$ and limit GDP parameter $\mu$ in Table \ref{table:tkmmu_celeba} and \ref{table:tkmmu_femnist}. 
% \begin{table}[t]
% \centering
% \caption{Celeba $\epsilon, \delta$ and $\sigma$}
% % \resizebox{0.45 \textwidth}{!}{
% \begin{tabular}{ c c c c} \hline
% $T$ & $K/M$ & $\sigma$  & $\mu$ \\\hline 
% $400$ & $20/9343 $ & $1.08e-1$ & $1.10e+17$  \\\hline 
% $400$ & $20/9343$ & $6.51e-1$ &  $1.33e-01$\\\hline 
% $400$ & $20/9343$ &  $2.42$ & $1.85e-02$ \\\hline 
% $400$ &$20/9343$ & $9.69$ & $4.43e-03$ \\\hline  
% \end{tabular}
% % }
% \label{table:tkmmu_celeba}
% \end{table}
% \begin{table}[t]
% \centering
% \caption{FEMNIST $\epsilon, \delta$ and $\sigma$}
% % \resizebox{0.45 \textwidth}{!}{
% \begin{tabular}{c c c c} \hline
% $T$ & $K/M$ & $\sigma$  & $\mu$ \\\hline 
% $1000$ & $3/174 $ & $1.08e-1$ & $9.38e+18$  \\\hline 
% $1000$ & $3/174$ & $6.51e-1$ &  $1.13e+01$\\\hline 
% $1000$ & $3/174$ &  $2.42$ & $1.56$ \\\hline 
% $1000$ &$3/174$ & $9.69$ & $3.76e-01$ \\\hline  
% \end{tabular}
% % }
% \label{table:tkmmu_femnist}
% \end{table}
\begin{figure}[t!]
\begin{subfigure}{.45\textwidth}
  \centering
  \includegraphics[width=.98\linewidth]{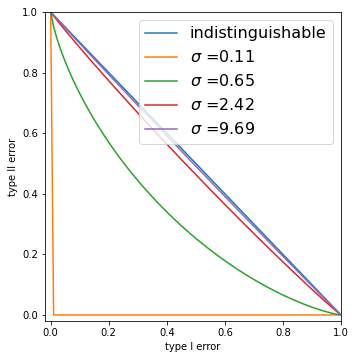}  
  \caption{CelebA (T=400)}
  \label{fig:sub-second}
\end{subfigure}
\begin{subfigure}{.45\textwidth}
  \centering
  \includegraphics[width=.98\linewidth]{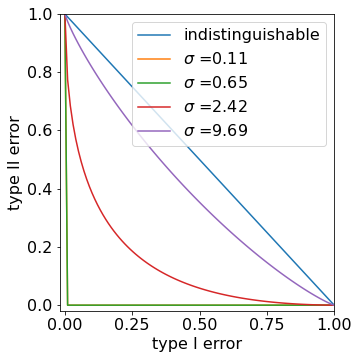}  
  \caption{FEMNIST (T=1000)}
  \label{fig:sub-first}
\end{subfigure}
\caption{Privacy guarantees for the proposed DPFedMTL after privacy composition during the whole course of training.
Privacy analysis is performed for different $\sigma$ settings and privacy loss is approximated by Central Limit Theorem.
}
\label{fig:privacy}
\end{figure}

Figure~\ref{fig:privacy} plots the distinguishability between $M(S)$ and $M(S')$ in terms of type I and type II errors based on CLT based approximation.
As presented in Figure~\ref{fig:privacy}, 
the blue line denotes the situation where no information can be inferred, 
while other colored lines represent the accumulated privacy guarantees with different $\sigma$ under the privacy composition after the whole training process. The closer the lines to the blue line (i.e., indistinguishable), the better the privacy guarantee it is after composition.
Here, experiments for CelebA and FEMINST are trained for 400 and 1000 steps, respectively, for all $\sigma$ settings.
We observe after privacy composition, a small $\sigma=0.65$ is enough for CelebA to achieve good privacy guarantee with good utility (accuracy $90.25\%$), while FEMINIST needs much larger $\sigma=9.69$ to provide good privacy protection with somewhat degraded utility (accuracy $76.79\%$, i.e., reduced by -$4.45\%$). 
% We also notice that 
In all, dataset CelebA is more sensitive to noise added compared to dataset FEMINST.
%, which shows a significant decrease in accuracy when $\sigma$ raise to $0.65$. 

\subsection{Algorithm Convergence}
Furthermore, we compare the convergence of different methods in Figure \ref{fig:convergence}.
Following the previous privacy analysis, we present the convergence for DP types of methods under the setting with noise level $\sigma$ set as $0.65$ for CelebA and $9.69$ for FEMNIST, i.e., the parameters with good privacy and utility trade-off.
From this figure, we observe that both \texttt{FedMTL} (i.e., green line) and  \texttt{DPFedMTL} (i.e., red line) converge much faster than their \texttt{FedAvg} based counterparts, while achieve comparable convergence speed compared to \texttt{Local} models.
\begin{figure}[ht]
\begin{subfigure}{.23\textwidth}
  \centering
  % include second image
  \includegraphics[width=.98\linewidth]{pic/celeba/sigma_9.68961}  
  \caption{CelebA with $\sigma$=$9.69$} 
  \label{fig:sub-second}
\end{subfigure}
\begin{subfigure}{.23\textwidth}
  \centering
  % include second image
  \includegraphics[width=.98\linewidth]{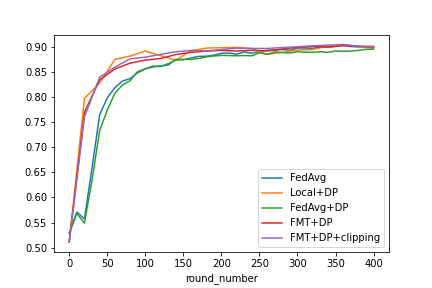}  
  \caption{CelebA with $\sigma$=$9.69$} 
  \label{fig:sub-second}
\end{subfigure}
\begin{subfigure}{.23\textwidth}
  \centering
  % include second image
  \includegraphics[width=.98\linewidth]{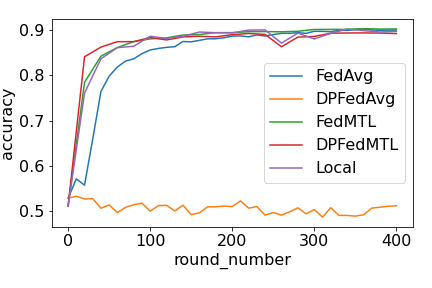}  
  \caption{FEMINST with $\sigma$=$0.65$}
  \label{fig:sub-second}
\end{subfigure}
\begin{subfigure}{.23\textwidth}
  \centering
  % include second image
  \includegraphics[width=.98\linewidth]{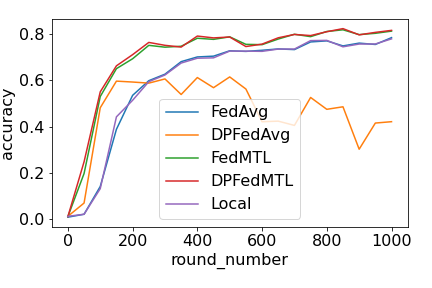}  
  \caption{FEMINST with $\sigma$=$0.65$}
  \label{fig:sub-second}
\end{subfigure}
\caption{Convergence comparison of different methods.}
\label{fig:convergence}
\end{figure}

\section{Conclusion}
In this paper, we present a differentially private federated multi-task learning algorithm. The goal is to protect privacy from the client level and balance the trade-off between privacy and utility guarantees. Our algorithm separates networks to shared layers and task-specific layers, which solves the inherent data heterogeneity in federated learning.
We provide privacy analysis based on Gaussian DP and convergence analysis using subspace decomposition for the cases with Lipschitz smooth objective functions under16the non-convex, convex, and strongly convex settings. Quantitative evaluations over two public datasets demonstrate the effectiveness of our method.

\bibliographystyle{plain}
\bibliography{dp}

\end{document}